\newtheorem{thm}{Theorem}[section]
\newtheorem{theorem}[thm]{Theorem}
\newtheorem{definition}[thm]{Definition}
\newtheorem{example}[thm]{Example}
\newtheorem{proposition}[thm]{Proposition}
\newtheorem{lemma}[thm]{Lemma}
\newtheorem{proof}{Proof}
\title{Logic Negation with Spiking Neural P Systems}
\author{Daniel Rodr\'{\i}guez-Chavarr\'{\i}a  \\
Dept. of Computer Science and Artificial Intelligence \\
University of Seville, Spain\\
\texttt{danrodcha@gmail.com}\\
\And
Miguel A. Guti\'errez-Naranjo \\
Dept. of Computer Science and Artificial Intelligence \\
University of Seville, Spain\\
\texttt{magutier@us.es}
\And
Joaqu\'{\i}n Borrego-D\'{\i}az \\
Dept. of Computer Science and Artificial Intelligence \\
University of Seville, Spain\\
\texttt{jborrego@us.es}
}
\begin{document}
\maketitle

\begin{abstract}
Nowadays, the success of neural networks as reasoning systems is doubtless. Nonetheless, one of the drawbacks of such reasoning systems is that they work as black-boxes and the acquired knowledge is not human readable. In this paper, we present a new step in order to close the gap between connectionist and logic based reasoning systems. We show that two of the most used inference rules for obtaining negative information in rule based reasoning systems, the so-called {\it Closed World Assumption} and {\it Negation as Finite Failure} can be characterized by means of {\it spiking neural P systems}, a formal model of the third generation of neural networks born in the framework of membrane computing.
\end{abstract}

\keywords{P systems \and Neural-symbolic integration \and Membrane computing}

\section{Introduction}
Neural networks are nowadays one of the most promising tools in computer sciences. They have been successfully applied to many real-world domains and the number of application fields is continuously increasing \cite{Goodfellow-et-al-2016}. Beyond this doubtless success, one of the main drawbacks of such systems is that they work as black-boxes, i.e., the learned knowledge through the training process is not human-readable. Learning process in neural networks consists basically of optimizing parameters (usually a huge amount of them) guided by some type of gradient-based method and the resulting model is usually far from having {\it semantic} sense for a human researcher. In fact, the problem of {\it explainability} is becoming a new research frontier in artificial intelligence systems, even beyond machine learning \cite{DBLP:journals/access/AdadiB18,DBLP:journals/corr/abs-1806-00069}. Due to this lack of readability, new studies about the integration of neural network models (the so-called {\it connectionist} systems) and logic-based systems \cite{BaderHitzler2005,Besold201597,DBLP:series/sci/2007-77,DBLP:journals/pr/MontavonLBSM17,DBLP:journals/tnn/SamekBMLM17,DBLP:journals/tnn/TranG18} can shed a new light on the future development of both research areas\footnote{A recent survey in neural-symbolic learning and reasoning can be found in \cite{DBLP:journals/corr/abs-1711-03902}.}.

In this context, the computational framework known as {\it spiking neural P systems} \cite{IbarraLPW2010,DBLP:journals/fuin/IonescuPY06} (SN P systems, for short) provides a formal framework for the integration of both disciplines: on the one hand, they use {\it spikes} (electrical impulses) as discrete units of information as in logic-based methods and, on the other hand, their models consist of graphs where the information flows among nodes as in standard neural network architectures. SN P systems belong to the third generation of neural network models \cite{DBLP:journals/nn/Maass97}, the so-called {\it integrate-and-fire} spiking neuron models \cite{gerstner2002spiking}. The integration of logic and neural networks via spikes takes advantage from an important biological fact: all the spikes inside a biological brain look alike. By using this feature, a computational binary code can be considered: sending one spike is considered as a sign for {\it true} and if no spikes are sent, then it is considered as a sign of {\it false}. These features were exploited in \cite{DBLP:journals/ijon/Diaz-PernilG18} where SN P systems were used to bridge bioinspired connectionist systems with the semantics of reasoning systems based on logic. 

The main contribution of this paper is to add new elements for dealing with negation in the interplay of bridging neural networks and logic. Bridges between both areas can help to enrich each other. 


In this study, we go on with the approach started in \cite{DBLP:journals/ijon/Diaz-PernilG18} by focusing on logic negation. Using negation in computational logic systems is often a hard task \cite{AptB1994} since pure derivative reasoning systems have no way to derive negative information from a set of facts and rules. This problem is solved by adding to the reasoning system a new inference rule which allow to {\it derive} negative information. In this paper, two of such inference rules are studied in the framework of SN P systems: {\it Closed World Assumption (CWA)} and {\it Negation as Finite Failure}\footnote{A detailed description of the controversy generated around the use of the negation in deductive databases is out of the scope of this paper. More information is available in \cite{DBLP:conf/adbt/Clark77,DBLP:conf/ijcai/JaffarLL83}.}.

Loosely speaking, given a deductive database $KB$, CWA considers {\it false} all the atomic sentences which are not logical consequence of $KB$. The attempts to check whether a sentence is a logical consequence of $KB$ or not can fall into an infinite loop, and therefore, a different {\it effective} rule is needed. Such rule is Negation as Finite Failure. It  considers {\it false} a sentence if all the attempts to prove it fail (according to some protocol). This is a quite restrictive definition of negation, but it is on the basis on many reasoning systems used in Artificial Intelligence as the Logic Programming paradigms \cite{DBLP:conf/ecai/Kakas92} or planning systems \cite{DBLP:conf/iclp/Lifschitz99}. 


The recent development of SN P systems involves SN P systems with communication on request \cite{DBLP:journals/ijns/PanPZN17}, applications of fuzzy SN P systems \cite{DBLP:conf/bic-ta/HuangZWRHW16,WangZZHWP2015}, {\it cell-like} SN P systems \cite{DBLP:journals/tcs/WuZPP16}, SN P systems {\it with request rules} \cite{DBLP:journals/ijon/SongP16}, SN P systems {\it with structural plasticity} \cite{DBLP:journals/nca/CabarleAPS15}, SN P systems {\it with thresholds} \cite{DBLP:journals/neco/ZengZSP14} or SN P systems with rules on synapses \cite{DBLP:journals/ijon/SongZLZ15} among many others.

The paper is organized as follows: Section \ref{Pre} recalls some basics on SN P systems and the procedural and declarative semantics of deductive databases. The following section shows how the inference rules CWA and Negation as Finite Failure can be characterized via SN P systems. Finally, some conclusions are showed in Section \ref{Con}. 

\section{Preliminaries}\label{Pre}
In this section, we recall briefly some basic concepts on SN P systems and the declarative and procedural semantics of deductive databases.

\subsection{Spiking Neural P Systems}
SN P systems were introduced as a model of computational devices inspired by the flow of information between neurons. This model keeps the basic idea of encoding and processing the information via binary events used other spiking neuron models (see, e.g., Ch. 3 in \cite{Floreano:2008:BAI:1457317}. Such devices are distributed and work in a parallel way. They consist of a directed graph with {\it neurons} placed on the nodes. 
Each neuron contains a number of copies of an object called the {\it spike} and it may contain several \emph{firing} and \emph{forgetting} rules. Firing rules send {\it spikes} to other neurons. Forgetting rules allow to remove spikes from a neuron. In order to decide if a rule is applicable, the contents of the neuron is checked against a regular set associated with the rule. In each time unit, if several rules can be applied in a neuron, one of them, non-deterministically chosen, must be used. In this way, rules are used in a sequential way in each neuron, but neurons function in parallel with each other. As usual, a global clock with discrete time steps is assumed and the functioning of the whole system is synchronized.

Formally, an SN P system of the degree $m\ge 1$ is a construct\footnote{In the literature, many different SN P systems models have been presented. In this paper, a simple model is considered.}

\[
   \Pi=(O,\sigma_1,\sigma_2,\ldots,\sigma_m,syn)
\]
where $O=\{a\}$ is the singleton alphabet ($a$ is called \emph{spike}) and $\sigma_1,\sigma_2,\ldots,\sigma_m$ are \emph{neurons}. Each neuron is a pair $\sigma_i = (n_i,R_i)$, $1\le i\le m$, where:
         \begin{enumerate}
            \item $n_i \ge 0$ is the \emph{initial number of spikes} contained
              in $\sigma_i$;
            \item $R_i$ is a finite set of \emph{rules} of the following two kinds:
                  \begin{itemize}
                     \item[(1)] \emph{firing} rules of type
                   $E/a^p \to a^q$, where $E$ is a regular
                   expression over the spike $a$ and $p,q \ge 1$ are integer numbers
                   ;
                     \item[(2)] \emph{forgetting} rules of type $a^s \to \lambda$, with $s$ an integer number such that
                   $s \ge 1$;
                  \end{itemize}
         \end{enumerate}
The set of synapses (edges) $syn$ is a set of pairs $syn \subseteq \{1,2,\ldots,m\} \times \{1,2,\ldots,m\}$, verifying that $(i,i)$ does not belong to $sys$ for any $i\in \{1,\dots,m\}$. 

Let us suppose that the neuron $\sigma_i$ contains $k$ spikes and a rule $E/a^p \to a^q$  with $k\ge p$. Let $L(E)$ be the language generated by the regular expression $E$. In these conditions, if $a^k$ belongs to $L(E)$, then the rule $E/a^p \to a^q$ can be applied. The application is performed by sending $q$ spikes to all neurons $\sigma_j$ such that $(i,j) \in syn$ and deleting $p$ spikes from $\sigma_i$ (thus only $k-p$ spikes remain into the neuron). In this case, it is said that the neuron is {\it fired}.  

Let us now suppose that the neuron $\sigma_i$ contains exactly $s$ spikes and the forgetting rule $a^s \to \lambda$. In such case, the rule can be fired by removing all the $s$ spikes from $\sigma_i$. If the regular expression $E$ in a firing rule $E/a^p \to a^q$  is equal to $a^p$, then the firing rule can be expressed as $a^p \to a^q$. In each time unit, if a neuron $\sigma_i$ can use one of its rules, then one of them must be used. If two or more rules can be applied in a neuron, then only one of them is non-deterministically chosen regardless of its type. The SN P system evolves according to these type of rules and reaches different configurations which are represented as vectors ${\mathbb C}_j=(t_1^j,\dots,t_m^j)$ where $t_k^j$ stands for the number of spikes at the neuron $\sigma_k$ in the $j-th$ configuration. It will be useful to consider only the first components of a configuration. Let us define ${\mathbb C}_j[1,\dots,n] =(t_1^j,\dots,t_n^j)$ as the $n$-dimensional vector composed by the $n$ first components of ${\mathbb C}_j$. The initial configuration is the vector with the number of spikes in each neuron at the beginning of the computation ${\mathbb C}_0=(n_1, n_2, \dots , n_m)$. By using the rules described above, transitions between configurations can be defined. A sequence of transitions which starts at the initial configuration is called a computation. 

\subsection{Declarative Semantics of Rule-based Deductive Databases}
Reasoning based on rules can be formalized according to different approaches. In this paper, propositional logic is considered for representing knowledge. Different formal representation systems where the number of available {\it terms} is finite (as those based on pairs attribute-value of first order logic representations without function symbols) can be bijectively mapped onto propositional logic systems and therefore the presented approach covers many real-life cases.

Next, some basics on propositional logic are provided. Let $\{p_1,\dots, p_n\}$ be a set of variables. A literal is a variable or a negated variable. An expression $L_1\wedge \dots \wedge L_n \to A$, where $n\ge 0$, $A$ is a variable and $L_1,\dots L_n$ are literals is a rule. The conjunction $L_1\wedge\dots\wedge L_n$ is the {\it body} and the variable $A$ is the {\it head} of the rule. If $n=0$, the body of the rule is empty. A finite set of rules $KB$ is called a deductive database. A mapping $I: \{p_1,\dots, p_n\} \to \{0,1\}$ is an {\it interpretation}, which is usually represented as a vector $(i_1,\dots, i_n)$ with $I(p_k)=i_k\in\{0,1\}$ for $k\in\{1,\dots, n\}$. The interpretations $I_\downarrow$ and $I_\uparrow$ are defined as $I_\downarrow = (0,\dots,0)$ and $I_\uparrow = (1,\dots,1)$. The set of all the interpretations on a set of $n$ variables denote by $2^n$. Given two interpretations $I_1$ and $I_2$, $I_1 \subseteq I_2$ if for all $k\in\{1,\dots, n\}$, $I_1(p_k)=1$ implies $I_2(p_k)=1$; $I_1 \cup I_2$  and $I_1\cap I_2$ are new interpretations such that $(I_1\cup I_2)(p_k) = max\{I_1(p_k),I_2(p_k)\}$ and $(I_1\cap I_2)(p_k) = min\{I_1(p_k),I_2(p_k)\}$ for $k\in\{1,\dots ,n\}$. An operator $S:2^n\to 2^n$ is {\it monotone} if for all interpretations $I_1$ and $I_2$, if $I_1\subseteq I_2$, then $S(I_1)\subseteq S(I_2)$. An interpretation $I$ is extended in the following way: $I(\neg p_i) = 1 - I(p_i)$ for a variable $p_i$; $I(L_1\wedge\dots\wedge L_n) = \min\{I(L_1),\dots,I(L_n)\}$ and for a rule\footnote{Let us remark that, according to the definition, $I(\to A) = 1$ if and only if $I(A) = 1$.}

$$I(L_1\wedge \dots \wedge L_n \to A) =
\left\{
\begin{array}{ll}
0 & \mbox{if } I(L_1\wedge \dots \wedge L_n) = 1\mbox{ and }I(A)=0 \\
1 & \mbox{otherwise}
\end{array}
\right.$$

Next, we recall the notions of {\it model} and {\it F-model} of a deductive database. The concept of $F$-model is one of the key ideas in this paper. To the best of our knowledge, it was firstly presented in \cite{Yang1991}. The definition used in this paper is adapted from the original one.

\begin{definition}
Let $I$ be an interpretation for a deductive database $KB$ 
\begin{itemize}
\item $I$ is a {\it model} if for all rule  $L_1\wedge \dots \wedge L_n \to A$ verifying that $\displaystyle \min_{i\in\{1,\dots, n\}} I(L_i)=1$, the equality $I(A)=1$ holds; in other words, if $I(R)=1$ for all rule $R \in KB$.
\item $I$ is a {\it F-model} if for all rule  $L_1\wedge \dots \wedge L_n \to A$ verifying that $I(A)=1$, the equality $\displaystyle \max_{i\in\{1,\dots, n\}} I(L_i)=1$ holds.
\end{itemize}
\end{definition}

Next example illustrates these concepts.

\begin{example}\label{Ex_1}
Let $KB$ be the deductive database on the set $\{p_1,p_2,p_3,p_4,p_5,p_6,$ $p_7,p_8,p_9\}$ defined as follows:

$$\begin{array}{rrrrrrrrrlr}
R_1 \equiv & \rightarrow p_1  & \hspace{3mm}  & R_4 \equiv & p_3 \wedge p_6 \rightarrow p_4  & \hspace{3mm} & R_7 \equiv  &    p_6 \rightarrow p_5 & \hspace{3mm} & R_9 \equiv   &    p_7 \rightarrow p_2 \\
R_2 \equiv & p_1 \rightarrow p_2  & & R_5 \equiv &    p_4            \rightarrow p_5  & & R_8 \equiv   &    p_8 \rightarrow p_9  & & R_{10} \equiv &    p_9 \rightarrow p_8 \\
R_3 \equiv & p_1 \wedge p_2 \rightarrow p_3  & & R_6 \equiv &    p_7 \rightarrow p_6 & & & &  & & \\
\end{array}$$


\noindent then, the interpretation represented by the vector $I_1=(1,1,1,0,0,0,0,0,0)$ is a model of $KB$.
In this case, the rules verifying 
$$\displaystyle \min_{i\in\{1,\dots, n\}} I(L_i)=1$$ 
are $R_1$, $R_2$ and $R_3$ and all of them satisfies $I(A)=1$.

The interpretations $I_2 = (0,0,0,1,1,1,1,0,0)$ and $I_3=(0,0,0,1,1,1,1,1,1)$ are {\it F-models} of $KB$. In both cases,
for all rule  $L_1\wedge \dots \wedge L_n \to A$ verifying that $I(A)=1$, the equality $$\displaystyle \max_{i\in\{1,\dots, n\}} I(L_i)=1$$ holds. If we consider the interpretation $I_2$, then $I_2(p_4) = I_2(p_5) =I_2(p_6) =I_2(p_7) = 1$ and all the rules with variables $p_4$, $p_5$, $p_6$ or $p_7$ in their heads ($R_4$, $R_5$ and $R_6$) verify that there exists a variable $q$ in the body of the rule with $I(q)=1$. The case of $I_3$ is analogous.

\end{example}

In a certain sense, {\it F-models} keep a {\it duality} with respect the concept of models. 
If $I_A$ and $I_B$ are models, then $I_A\cap I_B$ is also a model and if they are {\it F-models}, then $I_A \cup I_B$ is also a {\it F-model} \cite{Yang1991}. Next, the definition of {\it Failure Operator} $F_{KB}$ of a deductive database $FK$ is recalled. It can be seen as a dual of the Kowalski's immediate consequence operator $T_{KB}$ \cite{DBLP:journals/jacm/EmdenK76}. 

\begin{definition}
Let $\{p_1,\dots,p_n\}$ be a set of variables and $KB$ a deductive database on it. The failure operator of $KB$ is the mapping $F_{KB}: 2^n \to 2^n$ such that for $I \in 2^n$, $F_{KB}(I)$ is an interpretation $F_{KB}(I): \{p_1,\dots,p_n\} \to \{0,1\}$ where $F_{KB}(I)(p_k)=1$ if for each rule $L_1\wedge \dots \wedge L_n \to p_k$ in $KB$, $\max_{i\in\{1,\dots, n\}} I(L_i)=1$ holds (for $k\in\{1,\dots,n\}$); otherwise, $F_{KB}(I)(p_k)=0$. 
\end{definition}

Let us remark that, according to the definition, if there is no rule in $KB$ with $p_k$ in its head, then $F_{KB}(I)(p_k)=1$, for all interpretation $I$.

\vspace{2mm}

Kowalski's operator $T_{KB}$ allows to characterize the models of $KB$ (see, e.g. \cite{HitzlerSeda2011}) in the sense that an interpretation $I$ is a model of $KB$ if and only if $T_{KB}(I) \subseteq I$. Proposition \ref{Pr_1} shows that the failure operator $F_{KB}$ also allows to characterize the {\it F-models}. The intuition behind the failure operator is to capture the idea of {\it immediate failure} in a similar way that the operator $T_{KB}$ captures the idea of {\it immediate consequence}.

\begin{proposition}\label{Pr_1} \cite{Yang1991} Let $KB$ be a deductive database.
\begin{itemize}
\item An interpretation $I_F$ is an $F$-model of $KB$ if and only if $I_F\subset F_{KB}(I)$.
\item The failure operator $F_{KB}$ is monotone, over the set of the interpretations of $KB$.
\end{itemize}
\end{proposition}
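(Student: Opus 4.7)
The plan is to prove each of the two bullets by unfolding the definitions pointwise on the variables $p_k$. I read the first claim as $I_F \subseteq F_{KB}(I_F)$, by analogy with the characterization $T_{KB}(I) \subseteq I$ of models that was recalled just before the proposition.

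For the $F$-model characterization, the key observation is that, by definition, $F_{KB}(I_F)(p_k)=1$ is exactly the statement ``for every rule of $KB$ with head $p_k$, $\max_i I_F(L_i)=1$'' (vacuously true if no such rule exists, by the remark following the definition of $F_{KB}$). For the forward direction, I would assume $I_F$ is an $F$-model and fix $p_k$ with $I_F(p_k)=1$; applying the $F$-model condition to each rule having $p_k$ in its head gives $\max_i I_F(L_i)=1$, hence $F_{KB}(I_F)(p_k)=1$. For the converse, I would take any rule $L_1\wedge\dots\wedge L_n \to A$ with $I_F(A)=1$; the hypothesis $I_F \subseteq F_{KB}(I_F)$ forces $F_{KB}(I_F)(A)=1$, and the definition of $F_{KB}$ specialised to this particular rule yields $\max_i I_F(L_i)=1$, which is precisely the $F$-model condition.

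For monotonicity, I would take $I_1 \subseteq I_2$ and show $F_{KB}(I_1)(p_k) \le F_{KB}(I_2)(p_k)$ variable by variable. Assuming $F_{KB}(I_1)(p_k)=1$, every rule with head $p_k$ contains some literal $L_i$ with $I_1(L_i)=1$; it then suffices to transport this to $I_2(L_i)=1$. When $L_i = p_j$ is a positive literal, this is immediate from $I_1 \subseteq I_2$, and the inequality follows for that rule; doing this for every rule with head $p_k$ gives $F_{KB}(I_2)(p_k)=1$.

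The main obstacle is the case $L_i = \neg p_j$: then $I_1(p_j)=0$ does not force $I_2(p_j)=0$, so $I_2(L_i)$ can drop to $0$ and monotonicity would fail. I therefore expect monotonicity as stated to hold cleanly only when the bodies of rules in $KB$ are purely positive (definite programs), an implicit restriction that matches Example~\ref{Ex_1} and the CWA / Negation-as-Failure setting treated in the next section. Under this restriction, the transport argument above completes the proof.
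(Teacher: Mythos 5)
The paper itself gives no proof of this proposition: it is quoted from \cite{Yang1991} and used as a black box, so there is no in-house argument to compare yours against. Judged on its own terms, your proof is correct and is the expected definition-unfolding argument. Your reading of the statement as $I_F\subseteq F_{KB}(I_F)$ is the right one (the $I$ on the right is a typo for $I_F$, and $\subset$ must mean the $\subseteq$ defined earlier in the section, since least fixed points of $F_{KB}$ such as $F_{KB}\downarrow\omega$ are $F$-models), and both directions of the equivalence go through exactly as you describe, including the vacuous case of a variable heading no rule. Your reservation about monotonicity is also genuine and worth stating explicitly rather than as a hedge: with the paper's official definition of a rule, bodies may contain negative literals, and then monotonicity fails outright --- for $KB=\{\neg p_1\to p_2\}$ one has $I_1=(0,0)\subseteq I_2=(1,0)$ yet $F_{KB}(I_1)(p_2)=1$ while $F_{KB}(I_2)(p_2)=0$. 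The proposition (like the recalled characterization of models via $T_{KB}(I)\subseteq I$, and the later theorems on SLD finite failure) is really a statement about definite databases with atomic bodies, which is what the procedural-semantics subsection and every example in the paper silently assume; under that restriction your transport argument closes the second bullet. The only point I would add a sentence for is the $n=0$ case: for a fact $\to p_k$ the maximum over the empty body is $0$, so $F_{KB}(I)(p_k)=0$ for every $I$ and no $F$-model assigns $1$ to $p_k$; your argument is consistent with this, but it is the one boundary case a reader will want spelled out.
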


Since the image of an interpretation by the $F_{KB}$ operator is an interpretation itself, it can be iteratively applied.

\begin{definition}
Let $KB$ be a deductive database and $F_{KB}$ its failure operator. 
\begin{itemize}
\item[(a)] The mapping $F_{KB}\downarrow: {\mathbb N} \to 2^n$ is defined as follows: $F_{KB}\downarrow 0 = I_\downarrow$ and $F_{KB}\downarrow n = F_{KB}\,(F_{KB}\downarrow (n-1))$ if $n>0$. In the limit, it is also considered $$\displaystyle F_{KB}\downarrow \omega = \bigcup_{k\ge 0}F_{KB}\downarrow k$$

\item[(b)]The mapping $F_{KB}\uparrow: {\mathbb N} \to 2^n$ is defined as follows: $F_{KB}\uparrow 0 = I_\uparrow$ and $F_{KB}\uparrow n = F_{KB}\,(F_{KB}\uparrow (n-1))$ if $n>0$. In the limit, it is also considered $$\displaystyle F_{KB}\uparrow \omega = \bigcap_{k\ge 0}F_{KB}\uparrow k$$
\end{itemize}
\end{definition}

Bearing in mind that the number of rules and variables in a deductive database are finite, the next Proposition is immediate.
\begin{proposition}\label{Cor_1}
Let $KB$ be a deductive database and $F_{KB}$ its failure operator.
\begin{itemize}
\item[(a)] There exists $n \in \mathbb{N}$ such that $F_{KB}\uparrow n = F_{KB}\uparrow k$ for all $k \geq n$.
\item[(b)] There exists $n \in \mathbb{N}$ such that $F_{KB}\downarrow n = F_{KB}\downarrow k$ for all $k \geq n$.
\end{itemize}
\end{proposition}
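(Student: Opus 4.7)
The strategy is to exploit the fact that $2^n$ is a finite set (it has $2^n$ elements), so any monotone sequence inside it must stabilize after finitely many steps. The monotonicity of $F_{KB}$ guaranteed by Proposition \ref{Pr_1} is the tool that will turn both sequences into monotone chains.

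For part (a), I would first establish that the sequence $\{F_{KB}\uparrow k\}_{k\ge 0}$ is decreasing with respect to $\subseteq$. The base step is immediate: since $I_\uparrow = (1,\dots,1)$ is the maximum interpretation, $F_{KB}\uparrow 1 = F_{KB}(I_\uparrow) \subseteq I_\uparrow = F_{KB}\uparrow 0$. The inductive step uses the monotonicity of $F_{KB}$: assuming $F_{KB}\uparrow k \subseteq F_{KB}\uparrow (k-1)$, applying $F_{KB}$ yields $F_{KB}\uparrow (k+1) \subseteq F_{KB}\uparrow k$. Hence $\{F_{KB}\uparrow k\}_{k\ge 0}$ is a decreasing chain in the finite lattice $2^n$. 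Since an interpretation that strictly decreases must lose at least one coordinate from $1$ to $0$, and there are only $n$ coordinates, the chain must stabilize after at most $n$ steps; in particular, there exists $n \in \mathbb{N}$ with $F_{KB}\uparrow n = F_{KB}\uparrow k$ for all $k \ge n$.

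Part (b) is entirely symmetric. Start from $I_\downarrow = (0,\dots,0)$, the minimum interpretation, so $F_{KB}\downarrow 0 = I_\downarrow \subseteq F_{KB}(I_\downarrow) = F_{KB}\downarrow 1$. Again by monotonicity, induction gives $F_{KB}\downarrow k \subseteq F_{KB}\downarrow (k+1)$ for every $k$, producing an increasing chain in $2^n$. Because the chain is bounded above by $I_\uparrow$ and lives in a finite set, it must stabilize, giving the required $n$.

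I do not expect a genuine obstacle here: once one records the two key observations (finiteness of $2^n$ and monotonicity of $F_{KB}$), the rest is a standard ascending/descending chain argument in a finite poset. The only item worth phrasing carefully is the base step of each induction, which relies on the extremal character of $I_\uparrow$ and $I_\downarrow$ rather than on any particular property of $KB$.
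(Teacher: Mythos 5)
Your argument is correct. Note that the paper itself offers no written proof: it merely remarks that the proposition is ``immediate'' from the finiteness of the sets of rules and variables. Your write-up supplies the ingredient that this remark quietly glosses over: finiteness of the set of interpretations alone only guarantees that the orbit of $F_{KB}$ is eventually periodic (by pigeonhole it must revisit a value), not that it becomes constant; to exclude a nontrivial cycle one needs the monotonicity of $F_{KB}$ from Proposition \ref{Pr_1}, which turns each sequence into a descending (resp.\ ascending) chain that can change value at most as many times as there are variables. Your base steps, resting only on $I_\uparrow$ and $I_\downarrow$ being the extremal interpretations, and your inductive steps are both sound. The only cosmetic point is that you use the letter $n$ simultaneously for the number of variables and for the stabilization index demanded by the statement; it would be cleaner to call the latter $n_0$ and observe that $n_0 \le n$ suffices, which in fact yields a slightly stronger, quantitative version of the proposition than the paper states.
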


Let us remark that Prop. \ref{Cor_1} implies that the number of computation steps for reaching the above limits is finite.

\begin{example}\label{Ex_2}
Let us consider again the database $KB$ used in Example \ref{Ex_1} and its failure operator. The following interpretations are obtained.

$$\begin{array}{l}
F_{KB}\downarrow 0 = I_\downarrow = (0,0,0,0,0,0,0,0,0)\\
F_{KB}\downarrow 1 =  F_{KB} (F_{KB}\downarrow 0) = (0,0,0,0,0,0,1,0,0)\\
F_{KB}\downarrow 2 =  F_{KB} (F_{KB}\downarrow 1) = (0,0,0,0,0,1,1,0,0)\\
F_{KB}\downarrow 3 =  F_{KB} (F_{KB}\downarrow 2) = (0,0,0,1,0,1,1,0,0)\\
F_{KB}\downarrow 4 =  F_{KB} (F_{KB}\downarrow 3) = (0,0,0,1,1,1,1,0,0)
\end{array}$$

Since $F_{KB}\downarrow 5 = F_{KB}\downarrow 4$, then $F_{KB}\downarrow \omega = (0,0,0,1,1,1,1,0,0)$

$$\begin{array}{l}
F_{KB}\uparrow 0 = I_\uparrow = (1,1,1,1,1,1,1,1,1)\\
F_{KB}\uparrow 1 =  F_{KB} (F_{KB}\uparrow 0) = (0,1,1,1,1,1,1,1,1)\\
F_{KB}\uparrow 2 =  F_{KB} (F_{KB}\uparrow 1) = (0,0,1,1,1,1,1,1,1)\\
F_{KB}\uparrow 3 =  F_{KB} (F_{KB}\uparrow 2) = (0,0,0,1,1,1,1,1,1)
\end{array}$$
Since $F_{KB}\uparrow 4 = F_{KB}\uparrow 3$, then $F_{KB}\uparrow \omega = (0,0,0,1,1,1,1,1,1)$

\end{example}

\subsection{Procedural Semantics of Rule-based Deductive Databases}
The main result of this paper is the characterization of the set of variables obtained by the non-monotonic inference rules CWA  and {\it Negation as Finite Failure} via the  procedural behaviour of an SN P system. For the sake of completeness, some basics of the procedural semantics of deductive databases are recalled\footnote{A detailed description can be found in \cite{lloyd1987foundations}.}. A {\it goal} is a formula $\neg B_1 \vee \dots \vee \neg B_n$ where $B_i$ are atoms. As usual, the goal $\neg B_1 \vee \dots \vee \neg B_n$ will be represented as $B_1,\dots,B_n \rightarrow$. We also consider the empty clause $\Box$ as a goal. Given a goal $G \,\equiv\, A_1,\dots, A_{k-1},A_k,A_{k+1},\dots, A_n \rightarrow$ and a rule $R\, \equiv\, B_1,\dots, B_m \rightarrow A_k$, the goal
$$G' \equiv A_1,\dots, A_{k-1},B_1,\dots, B_m,A_{k+1},\dots, A_n \rightarrow$$
is called the {\it resolvent} of $R$ and $G$. It is also said that $G'$ is derived from $R$ and $G$. Let $KB$ be a deductive database and $G$ a goal. An SLD-derivation of $KB \cup \{G\}$ consists of a (finite or infinite) sequence $G_0,G_1,\dots$ of goals with $G_0=G$ and a sequence of rules $R_1,R_2,\dots$ from $KB$ such that $G_{i+1}$ is derived from $R_{i+1}$ and $G_i$. 
It is said that $KB \cup \{G\}$ has a finite failed tree if all the SLD-derivations are finite and none of them  has the empty clause $\Box$ as the last goal of the derivation. The {\it failure set} of $KB$ is the set of all variables $A$ for which there exists a finite failed tree for $KB \cup \{A \rightarrow \}$.

\begin{example}\label{Ex_3}
Let $KB$ be the same deductive database from Example \ref{Ex_1}. Next, some SLD derivations are calculated:
$$
\begin{array}{ccccccccc}
KB \cup  \{ p_3 \rightarrow \} & & & KB \cup  \{ p_9 \rightarrow \} & & & KB \cup  \{ p_6 \rightarrow \}\\
\begin{array}{ll}
Rule \; used & Goals \\
& p_3 \rightarrow  \\
R_3 & p_1,p_2 \rightarrow  \\
R_2 & p_1 \rightarrow  \\
R_1 & \Box \\\\
\end{array} & & &
\begin{array}{ll}
Rule \; used & Goals \\
& p_9 \rightarrow  \\
R_8 & p_8 \rightarrow \\
R_{10} & p_9 \rightarrow  \\
R_8 & p_8 \rightarrow \\
\vdots & \vdots
\end{array} & & &
\begin{array}{ll}
Rule \; used & Goals \\
& p_6 \rightarrow  \\
R_6 & p_7 \rightarrow \\\\\\\\
\end{array}
\end{array}$$
As shown above, the goals $p_3 \rightarrow$ and $p_9 \rightarrow$ do not have finite failed trees whereas $p_6 \rightarrow$ does.  Finally, it is easy to check that the {\it failure set} of $KB$ is $\{ p_4,p_5,p_6,p_7 \}$.
\end{example}

We give now a brief recall of the formal definition of both inference rules. A detailed motivation of such rules is out of the scope of this paper. The first inference rule
for deriving such negative information considered in this paper is the CWA \cite{DBLP:conf/adbt/Reiter77}: {\it If A is not a logical consequence of $KB$, then infer $\neg A$}. 
The second inference rule called {\it Negation as Finite Failure} \cite{DBLP:conf/adbt/Clark77}: {\it If $KB \cup \{A \rightarrow\}$ has a finite failed tree, then infer $\neg A$}, or, in other words, if $A$ belongs to the failure set, then infer $\neg A$.

The next Theorem is an adaptation of the Th.13.6 in \cite{lloyd1987foundations} and provides a procedural characterization of the variables in the failure set of a deductive database $KB$. It settles the equality of two sets defined with two different approaches: on the one hand, the set of variables such that all the SLD-derivations fail after a finite number of steps and, on the other hand, the set of variables mapped onto 1 by the interpretation $F_{KB}\downarrow \omega$, obtained by the iteration of the failure operator.

\begin{theorem}\label{Th_1}
Let $KB$ be a database on a set of variables $\{p_1,\dots,p_n\}$ and $F_{KB}$ its failure operator. For all $k$ in $\{1,\dots,n\}$, $p_k$ is in the failure set of $KB$ if and only if $F_{KB}\downarrow \omega (p_k)=1$
\end{theorem}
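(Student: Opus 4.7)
The plan is to prove each direction by induction on a natural level, matching the stage at which $p_k$ first appears in $F_{KB}\downarrow$ with the depth of a finite failed SLD-tree for $p_k \rightarrow$.

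For $(\Leftarrow)$, Proposition \ref{Cor_1} together with $F_{KB}\downarrow \omega = \bigcup_{k \ge 0} F_{KB}\downarrow k$ provides a smallest $n \ge 1$ with $F_{KB}\downarrow n(p_k)=1$, on which the induction runs. In the base case $n=1$, every rule with head $p_k$ would have to satisfy $\max_i I_\downarrow(L_i)=1$; since $I_\downarrow$ assigns $0$ to every atom and the max over an empty body is not $1$, this forces $KB$ to contain no rule with head $p_k$, so the SLD-tree of $p_k \rightarrow$ has no children and is already finite failed. In the inductive step, for each rule $R \equiv B_1,\ldots,B_m \rightarrow p_k$ the definition of $F_{KB}$ yields an index $i(R)$ with $F_{KB}\downarrow(n-1)(B_{i(R)})=1$, and the inductive hypothesis provides a finite failed SLD-tree $T_R$ for $B_{i(R)}\rightarrow$. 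Construct an SLD-tree for $p_k\rightarrow$ that branches at the root into one child per applicable rule, selects the atom $B_{i(R)}$ in the child $B_1,\ldots,B_m\rightarrow$, and grafts $T_R$ below; the resulting tree is finite and no branch reaches $\Box$.

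For $(\Rightarrow)$, fix a finite failed SLD-tree $T$ for $p_k\rightarrow$ and induct on $d(T)$. When $d(T)=0$ no rule of $KB$ has head $p_k$, whence $F_{KB}\downarrow 1(p_k)=1$ vacuously. Otherwise each rule $R \equiv B_1,\ldots,B_m \rightarrow p_k$ produces a child whose subtree in $T$ is finite failed; by the standard switching lemma of SLD-resolution (equivalently, by the independence of finite failure from the selection rule) this subtree can be reorganised so that a single body atom $B_{i(R)}$ is selected first and admits a finite failed SLD-tree of strictly smaller depth than $d(T)$. The inductive hypothesis then furnishes $n(R)$ with $F_{KB}\downarrow n(R)(B_{i(R)})=1$, and putting $N = 1 + \max_R n(R)$, finite because $KB$ has finitely many rules with head $p_k$, the definition of $F_{KB}$ yields $F_{KB}\downarrow N(p_k)=1$ and therefore $F_{KB}\downarrow \omega(p_k)=1$.

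The delicate point is the second direction: translating the global fact that the whole subtree for $B_1,\ldots,B_m\rightarrow$ is finite failed into the local fact that some individual body atom $B_{i(R)}$ admits a finite failed tree of strictly smaller depth. This extraction is exactly where the switching lemma is indispensable, and it is also the structural reason why the theorem, as in Lloyd's original treatment, is insensitive to the particular computation rule chosen to generate SLD-derivations.
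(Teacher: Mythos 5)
Your proof is correct in outline, but be aware that the paper does not actually prove this theorem: it is presented as ``an adaptation of Th.~13.6 in \cite{lloyd1987foundations}'' and the authors defer entirely to Lloyd. What you have written is essentially Lloyd's standard argument specialised to the propositional case --- a double induction matching the level at which $p_k$ enters $F_{KB}\downarrow$ against the depth of a finitely failed tree --- so you are supplying the proof the paper omits rather than diverging from one it gives. Two steps need more care than your sketch provides. First, in the $(\Leftarrow)$ direction you cannot literally graft $T_R$ (a tree for the goal $B_{i(R)}\rightarrow$) below the node $B_1,\dots,B_m\rightarrow$: the remaining atoms $B_j$, $j\neq i(R)$, persist as unselected ``passengers'' in every goal of the mimicked tree. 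The argument survives --- every branch of $T_R$ dies at a goal whose selected atom matches no rule, and the corresponding enlarged goal dies for the same reason without ever being $\Box$ --- but this must be said. Second, in the $(\Rightarrow)$ direction, extracting a single body atom with a finitely failed tree of smaller depth is not really the switching lemma: the atom selected at the root of the failed subtree of $B_1,\dots,B_m\rightarrow$ may be one that succeeds, failure being caused elsewhere, so the failing $B_{i(R)}$ must be produced by a separate induction on conjunctive goals (Lloyd's Lemma~13.5-style statement with the depth bound built in). You correctly identify this as the crux, but you name the wrong lemma. Finally, note that the paper's definition of ``finite failed tree'' quantifies over \emph{all} SLD-derivations, hence over all selection choices; read literally this makes the $(\Leftarrow)$ direction false (for $KB=\{q\to q,\ r\wedge q\to p\}$ with no rule for $r$ one has $F_{KB}\downarrow\omega(p)=1$, yet always selecting $q$ yields an infinite derivation from $p\rightarrow$). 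Your proof implicitly adopts the standard notion --- existence of a finitely failed SLD-tree under some computation rule --- which is the reading under which the theorem is true.
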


By using this theorem, we will prove in the next section that the finite failure set of a database $KB$ can be characterized by means of SN P systems. The next Theorem relates the CWA with the failure operator. A proof of it is out of the scope of this paper. Details can be found in  \cite{lloyd1987foundations} and \cite{Yang1991}.

\begin{theorem}\label{Th_2}
Let $KB$ be a database on a set of variables $\{p_1,\dots,p_n\}$ and $F_{KB}$ its failure operator. For all $k$ in $\{1,\dots,n\}$, $p_k$ is not a logical consequence of $KB$ if and only if $F_{KB}\uparrow \omega (p_k)=1$
\end{theorem}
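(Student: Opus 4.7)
The plan is to prove Theorem \ref{Th_2} by establishing a pointwise duality between the failure operator $F_{KB}$ and Kowalski's immediate consequence operator $T_{KB}$, and then invoking the classical characterization of logical consequence through the least Herbrand model of $KB$.

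First, I would unpack the two definitions to show that for every interpretation $I$,
\[
F_{KB}(\bar I) \;=\; \overline{T_{KB}(I)},
\]
where $\bar I(p_k) := 1 - I(p_k)$ denotes pointwise complementation. For definite rules this is essentially a De Morgan manipulation: $T_{KB}(I)(p_k)=1$ means that some rule for $p_k$ has all its body atoms true under $I$, and its negation (``no rule for $p_k$ has all body atoms true under $I$'') is exactly the condition that every rule for $p_k$ has some body atom false under $I$, which by definition of $\bar I$ is the condition $F_{KB}(\bar I)(p_k)=1$.

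Next, I would show by induction on $n$ that $F_{KB}\uparrow n = \overline{T_{KB}\uparrow n}$, where $T_{KB}\uparrow n$ is built in the analogous way but starting from $I_\downarrow = \overline{I_\uparrow}$. The base case is immediate, and the inductive step is a direct application of the duality identity above to $I = T_{KB}\uparrow n$. Because the set of variables is finite, Proposition \ref{Cor_1}(a) (and the classical analogue for $T_{KB}$) guarantees that both iterations stabilise after finitely many steps, so passing to the common limit yields $F_{KB}\uparrow \omega = \overline{T_{KB}\uparrow \omega}$. A standard result (see \cite{lloyd1987foundations}) identifies $T_{KB}\uparrow \omega$ with the least model of $KB$, and hence $p_k$ is a logical consequence of $KB$ iff $T_{KB}\uparrow \omega(p_k)=1$; combining this with the duality gives exactly the claimed equivalence.

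The step I expect to be the main obstacle is justifying the duality in the presence of negative body literals, since the paper's rule format admits arbitrary literals, not merely atoms. For a non-definite $KB$, logical consequence is no longer captured by a unique least model, and the cleanest way to salvage the result is either to restrict attention to definite databases (the setting of both \cite{lloyd1987foundations} and \cite{Yang1991}) or to reinterpret ``logical consequence'' via the Clark completion of $KB$. I would make this restriction explicit at the outset, since without it the theorem as stated requires substantially more machinery than the clean dualisation argument sketched above.
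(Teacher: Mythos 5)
The paper gives no proof of Theorem \ref{Th_2} to compare against: it explicitly declares the proof out of scope and defers to \cite{lloyd1987foundations} and \cite{Yang1991}. Your dualization argument is the standard way to discharge it and is sound as far as it goes. The identity $F_{KB}(\bar I)=\overline{T_{KB}(I)}$ holds literal by literal (note that $\bar I(L)=1-I(L)$ even for a negative literal $L=\neg p_j$, and the degenerate cases---empty body, no rule with head $p_k$---match up on both sides), the induction is routine, and the finite stabilization from Proposition \ref{Cor_1} together with its classical analogue for $T_{KB}$ lets you pass to $F_{KB}\uparrow\omega=\overline{T_{KB}\uparrow\omega}$; the van Emden--Kowalski identification of $T_{KB}\uparrow\omega$ with the least model then closes the argument. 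Your closing caveat is not mere caution: for non-definite databases the theorem as literally stated is false, not just harder to prove. Take $KB=\{\neg p_1\to p_2\}$ on $\{p_1,p_2\}$. Then $F_{KB}\uparrow\omega=(1,0)$, so the theorem would declare $p_2$ a logical consequence of $KB$; yet the interpretation $(1,0)$ (with $p_1$ true and $p_2$ false) is a model of $KB$, so $p_2$ is not a logical consequence. The same example shows that $F_{KB}$ is not monotone when negative body literals are allowed, contradicting Proposition \ref{Pr_1} as stated. So the restriction to definite databases (atoms only in rule bodies), which is the implicit setting of both \cite{lloyd1987foundations} and \cite{Yang1991}, is required for the statement itself and not only for your proof technique; making it explicit at the outset, as you propose, is exactly right.
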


\section{Logic Negation with SN P Systems}\label{Neg}
In this section, we bridge the neural model of SN P systems with the inference rules CWA and {\it Negation as Finite Failure}. The main theorems in this paper claim that the result of both inference rules can be computed in a finite number of steps by an appropriate SN P system. The proof of such results is achieved via some lemmas which link the properties of the SN P systems with the semantics of the deductive databases.

\begin{theorem}\label{Th_3}
Let us consider a set of variables $\{p_1,\dots, p_n\}$ and a deductive database $KB$ on it. Let $I$ be an interpretation on such set of variables. Let $F_{KB}$ be the failure operator of $KB$. An SN P system can be constructed from $KB$ such that $$F_{KB}(I)= {\mathbb C}_3[1,\dots,n]$$ \noindent where ${\mathbb C}_3$ is the configuration of the SN P system after the third step of computation.
\end{theorem}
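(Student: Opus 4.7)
The plan is to build an SN P system whose three-step dynamics mirror, layer by layer, the very definition of $F_{KB}(I)$: step 1 evaluates each literal $L_i$ (negation handled via the initial charge of an auxiliary neuron), step 2 computes the disjunction $\max_i I(L_i)$ for each rule body, and step 3 computes the conjunction over all rules sharing a head. To match the indexing in the statement, the first $n$ neurons $\sigma_1,\dots,\sigma_n$ will be reserved as output slots (initially empty), while the encoding of $I$ is carried by auxiliary neurons of index greater than $n$.

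For each variable $p_k$ I introduce three neurons: the output $\sigma_k$; an input $\sigma_k^{+}$ initially charged with $I(p_k)$ spikes and carrying $a\to a$; and a negation helper $\sigma_k^{-}$ initially charged with $1-I(p_k)$ spikes and the same rule. For each rule $R_j$ of $KB$, say with body of size $b_j$ and head $A_j$, I add a body neuron $\sigma_{R_j}$ equipped with the rules $a^s/a^s\to a$ for $s=1,\dots,b_j$. Synapses run from $\sigma_k^{+}$ (resp.\ $\sigma_k^{-}$) into $\sigma_{R_j}$ whenever $p_k$ occurs positively (resp.\ negatively) in $\mathrm{body}(R_j)$, and from $\sigma_{R_j}$ into $\sigma_k$ when $p_k=A_j$. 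Writing $m_k$ for the number of rules in $KB$ with head $p_k$, I give $\sigma_k$ the firing rule $a^{m_k}/a^{m_k-1}\to a$ together with the forgetting rules $a^s\to\lambda$ for $s=1,\dots,m_k-1$ whenever $m_k\ge 2$, and no rules at all when $m_k\in\{0,1\}$. Finally, to realise the convention $F_{KB}(I)(p_k)=1$ at heads with $m_k=0$, I attach a three-neuron delay chain $\sigma_{s_1}\to\sigma_{s_2}\to\sigma_{s_3}$ whose source carries one spike and each of whose neurons has rule $a\to a$, wiring $\sigma_{s_3}$ into every $\sigma_k$ such that $m_k=0$.

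A direct three-step trace then yields the conclusion. At $\mathbb{C}_1$ the pair $\sigma_k^{+},\sigma_k^{-}$ has emptied and each $\sigma_{R_j}$ holds $s_j=|\{L_i\in\mathrm{body}(R_j):I(L_i)=1\}|$ spikes. At $\mathbb{C}_2$ every $\sigma_{R_j}$ with $s_j\ge 1$ has fired exactly once, so each $\sigma_k$ holds $c_k=|\{R_j:A_j=p_k,\,\max_i I(L_i)=1\}|$ spikes. In step 3 the count $c_k$ selects the applicable rule uniquely: $a^{m_k}/a^{m_k-1}\to a$ when $c_k=m_k$ (leaving one spike in $\sigma_k$), the forgetting rule $a^{c_k}\to\lambda$ when $0<c_k<m_k$ (leaving none), no rule when $m_k\in\{0,1\}$ so that $c_k$ is preserved, and meanwhile the source chain deposits exactly one spike into every $\sigma_k$ with $m_k=0$. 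In all cases $\sigma_k$ ends with one spike when $c_k=m_k$ and with none otherwise, which by the definition of the failure operator is precisely $F_{KB}(I)(p_k)$.

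I expect the main obstacle to be realising the "AND over rules" inside a single synchronous step, using only the spike count as discriminator, with no implicit notion of temporal phase available. This forces the use of the regular-expression guard $a^{m_k}$ rather than $a^+$ in order to isolate the full-count case from partial counts, and the clean separation of input from output neurons so that the step-1 discharge of $\sigma_k^{+}$ cannot collide with the step-3 aggregation rule of $\sigma_k$. The boundary values $m_k\in\{0,1\}$ are what necessitate the auxiliary source chain and the "no rules" option, since $a^0$ and $a^{-1}$ are not admissible left-hand sides.
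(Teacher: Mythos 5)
Your construction is correct: the three-step trace you give does leave exactly $F_{KB}(I)(p_k)$ spikes in $\sigma_k$ at ${\mathbb C}_3$, and it shares with the paper's proof the two essential counting tricks --- a body neuron per rule carrying $a^l\to a$ for $l=1,\dots,b_j$ to realise ``fire once iff some body literal is true'', and an exact-count threshold equal to the number of rules with head $p_k$ to realise the conjunction over those rules. The wiring, however, is genuinely different. The paper performs the conjunction inside the \emph{input} neuron $\sigma_{n+k}$ (initially charged with $I(p_k)\cdot h_k$ spikes and carrying $a^{h_k}\to a$), so the rule neurons feed back into the very neuron that encodes the interpretation; your design is strictly feed-forward, with separate one-shot input neurons $\sigma_k^{\pm}$ and the conjunction done in the output neuron itself via $a^{m_k}/a^{m_k-1}\to a$ plus forgetting rules for the partial counts. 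The paper's re-entrant layout is what makes its machine iterable: every configuration ${\mathbb C}_{2t}$ re-encodes an interpretation in the same format as ${\mathbb C}_0$, which is precisely what Theorems \ref{Th_5} and \ref{Th_4} exploit when they run the system for $2m+1$ steps to reach $F_{KB}\uparrow\omega$ and $F_{KB}\downarrow\omega$; for the same reason it replaces your one-shot three-neuron delay line for headless variables by a persistent oscillator $\sigma_G\leftrightarrow\sigma_T$ injecting a spike every other step. Your system computes $F_{KB}(I)$ exactly once and then degenerates (for $m_k\ge 2$ the forgetting rule $a\to\lambda$ even erases the answer at step $4$), so it proves Theorem \ref{Th_3} as stated but would need to be re-engineered to obtain the paper's two main results. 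On the other hand, your $\sigma_k^{-}$ neurons charged with $1-I(p_k)$ handle negative literals in rule bodies explicitly, whereas the paper's synapse set only routes $I(p_i)$ to the body neurons; on databases that actually use negated body literals your construction is the more faithful to the declared generality of the rule format. None of these differences affects the correctness of your argument for the stated theorem.
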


Theorem \ref{Th_3} claims the equality of two $n$-dimensional vectors. The first one is the vector which represents the interpretation $F_{KB}(I): \{p_1,\dots, p_n\} \to \{0,1\}$ obtained by means of the application of the operator $F_{KB}$ to the interpretation $I$. The second one is the vector which represents the number of spikes in the neurons $\sigma_1,\dots,\sigma_n$ in the corresponding SN P system in the third configuration. The proof is constructive and it builds explicitly the SN P system. 

\begin{proof}
Let $KB$ be a deductive database such that $\{ r_1 , \dots r_k \}$ and $\{ p_1 , \dots p_n \}$  are the set of rules and the set of variables. Given a variable $p_i$, the number of rules which have $p_i$ in the head is denoted by $h_i$ and given a rule $r_j$, the number of variables in its body is denoted by $b_j$. The SN P system of degree $2n+k+2$.
$$\Pi_{KB} =(O,\sigma_1,\sigma_2,\ldots,\sigma_{2n+k+2},syn)$$ can be constructed as follows:
\begin{itemize}
\item $O=\{a\}$;
\item $\sigma_j = (0,\{a\to \lambda\})$ for $j\in \{1, \dots n\}$
\item $\sigma_{n+j} = (i_j,R_j)$, $j\in \{1, \dots n\}$, where 
\begin{itemize}
\item[$\bullet$] $i_j=I(p_j)$ if $h_j = 0$
\item[$\bullet$] $i_j=I(p_j)\cdot h_j$ if $h_j > 0$
\end{itemize}
and $R_j$ is the set of $h_j$ rules
\begin{itemize}
\item[$\bullet$] $R_j=  \{a \longrightarrow a \}$ if $h_j=0 $
\item[$\bullet$] $R_j=  \{a^{h_j} \longrightarrow a \}$  if $ h_j > 0 $
\end{itemize}

\item $\sigma_{2n+j} = (0,R_j)$, $j\in \{1, \dots k\}$, where $R_j$ is one of the following set of rules
\begin{itemize}
\item[$\bullet$] $R_j = \emptyset$ if $b_j=0$.
\item[$\bullet$] $R_j =  \{ a^l \to a \,|\, l \in\{1,\dots, b_j\}\,\}$ if $b_j>0$
\end{itemize}
\end{itemize}
For the sake of simplicity, the neurons $\sigma_{2n+k+1}$ and $\sigma_{2n+k+2}$ will be denoted by $\sigma_G$ and $\sigma_T$, respectively.
\begin{itemize}
\item $\sigma_G = (1, \{a\to a\})$
\item $\sigma_T = (0, \{a\to a\})$
\item $syn =$ \phantom{$\cup\,$} $\{(n+i,i)\,|\,i\in\{1,\dots,n\}\}$\newline
\phantom{$syn =$} $\cup\,$
$\left\{
\begin{array}{ll}
(n+i,2n+j)\,|\, & i\in\{1,\dots,n\}, j\in\{1,\dots,k\} \\
                & \mbox{and $p_i$ is a variable in the body of $r_j$}
\end{array}
\right\}$\newline
\phantom{$syn =$} $\cup\,$
$\left\{
\begin{array}{ll}
(2n+j,n+i)\,|\, & i\in\{1,\dots,n\}, j\in\{1,\dots,k\} \\
                & \mbox{and $p_i$ is the variable in the head of $r_j$}
\end{array}
\right\}$\newline
\phantom{$syn =$} $\cup\,$
$\{(G,T),(T,G)\}$\newline
\phantom{$syn =$} $\cup\,$
$\left\{
\begin{array}{ll}
(T,n+i)\,|\, & i\in\{1,\dots,n\} \\
              & \mbox{and $p_i$ is a variable such that $h_i = 0$}
\end{array}
\right\}$
\end{itemize}

The proof will be split into four lemmas. Although the result of the theorem only concerns to the third configuration, the lemmas are proved in general.

Before going on with the proof, the building of the SN P system is illustrated with the following toy example.

\begin{example}\label{Ex_new}
Let us consider the set of three variables $\{p_1,p_2,p_3\}$, a database on it with two rules
$$ r_1 \equiv p_1 \to p_2 \hspace{3cm} r_1 \equiv p_1,p_2 \to p_3$$
and the interpretation $I_{\downarrow}=(0,0,0)$. According to the notation, in this case $n=3$, $k=2$, $h_1=0$, $h_2=1$, $h_3=1$, $b_1=1$ and $b_2=2$. The associated SN P system has $2n+k+2=10$ neurons and its initial configuration is depicted in Fig. \ref{Fig_new}. Since the interpretation is $I_{\downarrow}$, there is only one spike in this first configuration ${\mathbb C}_0$. It is placed on the neuron $\sigma_G$. The rule $a\to a$ in $\sigma_G$ is applied and the unique spike in the configuration ${\mathbb C}_1$ is placed in $\sigma_T$. Since $\sigma_T$ has two outcoming synapses, the application of the rule $a\to a$ in it produces two spikes. Therefore, in the configuration ${\mathbb C}_2$ there are two spikes in the SN P system: one of them in $\sigma_G$ and the other one in $\sigma_4$. The application of the rule $a\to a$ in $\sigma_G$ sends one spike to $\sigma_T$, so in this neuron there a spike in the configuration ${\mathbb C}_3$. Since $\sigma_4$ has two outcoming synapses, the application of the rule $a\to a$ sends one spike to $\sigma_1$ and another to $\sigma_8$. To sum up, in the confiruration ${\mathbb C}_3$, there are three spikes in the system, each of them in the neurons $\sigma_T$, $\sigma_1$ and $\sigma_8$. According to the theorem, in order to know $F_{KB}(I_{\downarrow})$ it suffices to check the number of spikes in the neurons $\sigma_1$, $\sigma_2$ and $\sigma_3$ at the configuraton ${\mathbb C}_3$. In other words, this SN P system has computed $F_{KB}(I_{\downarrow}) = (1,0,0)$.

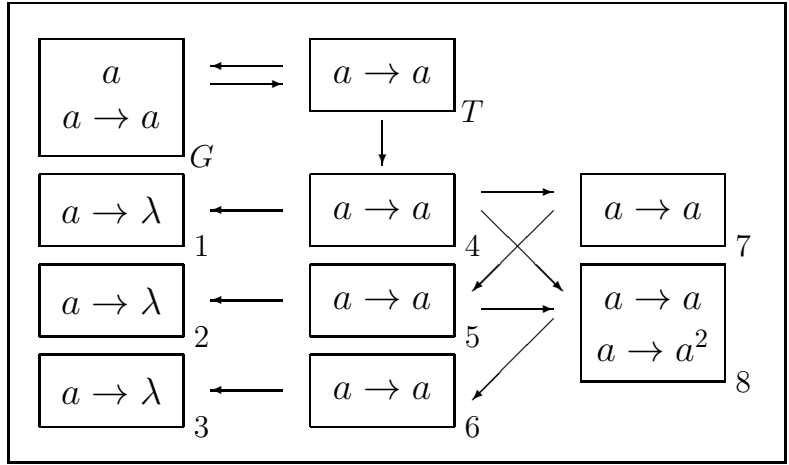
\begin{figure}[t]
\begin{center}
\fbox{
\setlength{\unitlength}{12mm}
\begin{picture}(8.2,4.9)

\put(0.1,3.3){\line(1,0){1.6}}
\put(0.1,4.6){\line(1,0){1.6}}
\put(0.1,3.3){\line(0,1){1.3}}
\put(1.7,3.3){\line(0,1){1.3}}
\put(0.9,4.2){\makebox(0,0){\Large $a$}}
\put(0.9,3.7){\makebox(0,0){\Large $a \to a$}}
\put(1.9,3.3){\makebox(0,0){\large $G$}}

\put(0.1,2.3){\line(1,0){1.6}}
\put(0.1,3.1){\line(1,0){1.6}}
\put(0.1,2.3){\line(0,1){0.8}}
\put(1.7,2.3){\line(0,1){0.8}}
\put(0.9,2.7){\makebox(0,0){\Large $a \to \lambda$}}
\put(1.9,2.3){\makebox(0,0){\large $1$}}

\put(0.1,1.3){\line(1,0){1.6}}
\put(0.1,2.1){\line(1,0){1.6}}
\put(0.1,1.3){\line(0,1){0.8}}
\put(1.7,1.3){\line(0,1){0.8}}
\put(0.9,1.7){\makebox(0,0){\Large $a \to \lambda$}}
\put(1.9,1.3){\makebox(0,0){\large $2$}}

\put(0.1,0.3){\line(1,0){1.6}}
\put(0.1,1.1){\line(1,0){1.6}}
\put(0.1,0.3){\line(0,1){0.8}}
\put(1.7,0.3){\line(0,1){0.8}}
\put(0.9,0.7){\makebox(0,0){\Large $a \to \lambda$}}
\put(1.9,0.3){\makebox(0,0){\large $3$}}

\put(3.1,3.8){\line(1,0){1.6}}
\put(3.1,4.6){\line(1,0){1.6}}
\put(3.1,3.8){\line(0,1){0.8}}
\put(4.7,3.8){\line(0,1){0.8}}
\put(3.9,4.2){\makebox(0,0){\Large $a \to a$}}
\put(4.9,3.8){\makebox(0,0){\large $T$}}

\put(3.1,2.3){\line(1,0){1.6}}
\put(3.1,3.1){\line(1,0){1.6}}
\put(3.1,2.3){\line(0,1){0.8}}
\put(4.7,2.3){\line(0,1){0.8}}
\put(3.9,2.7){\makebox(0,0){\Large $a \to a$}}
\put(4.9,2.3){\makebox(0,0){\large $4$}}

\put(3.1,1.3){\line(1,0){1.6}}
\put(3.1,2.1){\line(1,0){1.6}}
\put(3.1,1.3){\line(0,1){0.8}}
\put(4.7,1.3){\line(0,1){0.8}}
\put(3.9,1.7){\makebox(0,0){\Large $a \to a$}}
\put(4.9,1.3){\makebox(0,0){\large $5$}}

\put(3.1,0.3){\line(1,0){1.6}}
\put(3.1,1.1){\line(1,0){1.6}}
\put(3.1,0.3){\line(0,1){0.8}}
\put(4.7,0.3){\line(0,1){0.8}}
\put(3.9,0.7){\makebox(0,0){\Large $a \to a$}}
\put(4.9,0.3){\makebox(0,0){\large $6$}}

\put(6.1,2.3){\line(1,0){1.6}}
\put(6.1,3.1){\line(1,0){1.6}}
\put(6.1,2.3){\line(0,1){0.8}}
\put(7.7,2.3){\line(0,1){0.8}}
\put(6.9,2.7){\makebox(0,0){\Large $a \to a$}}
\put(7.9,2.3){\makebox(0,0){\large $7$}}

\put(6.1,0.8){\line(1,0){1.6}}
\put(6.1,2.1){\line(1,0){1.6}}
\put(6.1,0.8){\line(0,1){1.3}}
\put(7.7,0.8){\line(0,1){1.3}}
\put(6.9,1.7){\makebox(0,0){\Large $a\to a$}}
\put(6.9,1.2){\makebox(0,0){\Large $a \to a^2$}}
\put(7.9,0.8){\makebox(0,0){\large $8$}}

\put(2.8,0.7){\vector(-1,0){0.8}}
\put(2.8,1.7){\vector(-1,0){0.8}}
\put(2.8,2.7){\vector(-1,0){0.8}}
\put(2.8,4.3){\vector(-1,0){0.8}}
\put(2.0,4.1){\vector(1,0){0.8}}
\put(3.9,3.7){\vector(0,-1){0.5}}
\put(5.0,2.9){\vector(1,0){0.8}}
\put(5.0,2.7){\vector(1,-1){0.9}}
\put(5.8,2.7){\vector(-1,-1){0.9}}
\put(5.0,1.6){\vector(1,0){0.8}}
\put(5.8,1.5){\vector(-1,-1){0.9}}
\end{picture}
}
\end{center}

\caption{Initial configuration of the SN P system from Example \ref{Ex_new}. \label{Fig_new}}
\end{figure}
\end{example}

Next, the following lemmas will be proved.

\begin{lemma}
For all $t\ge 0$, in the $2t$-th configuration ${\mathbb C}_{2t}$ the neuron $\sigma_{G}$ has exactly one spike and  $\sigma_{T}$ is empty.
\end{lemma}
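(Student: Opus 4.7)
The plan is a straightforward induction on $t$, exploiting the fact that $\sigma_G$ and $\sigma_T$ essentially form an isolated two-neuron oscillator whose only purpose is to produce a spike at $\sigma_T$ every odd step (so that, via the synapses $(T,n+i)$ with $h_i=0$, the ``fact'' neurons get refreshed on the right schedule).

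First I would scan the definition of $syn$ in order to record the only two facts about $\sigma_G$ and $\sigma_T$ that really matter: (i) the only outgoing synapse from $\sigma_G$ is $(G,T)$, and (ii) the only synapse whose target is $\sigma_G$ is $(T,G)$; in particular $\sigma_G$ receives spikes only from $\sigma_T$, and $\sigma_T$ receives spikes only from $\sigma_G$. Coupled with the fact that both neurons contain a single rule $a\to a$, this makes the behaviour of the pair $(\sigma_G,\sigma_T)$ completely decoupled from the rest of the system.

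The base case $t=0$ is immediate: by construction, the initial configuration ${\mathbb C}_0$ has $\sigma_G$ with one spike and $\sigma_T$ empty. For the inductive step, assume the statement holds at ${\mathbb C}_{2t}$. In the transition to ${\mathbb C}_{2t+1}$, the rule $a\to a$ of $\sigma_G$ is the unique applicable rule in $\sigma_G$ (and $\sigma_T$ has no applicable rule since it is empty and receives nothing from elsewhere this step); firing it removes the spike from $\sigma_G$ and deposits one spike in $\sigma_T$. Thus at ${\mathbb C}_{2t+1}$, $\sigma_G$ is empty and $\sigma_T$ holds exactly one spike. In the next transition, the rule $a\to a$ of $\sigma_T$ fires, removing its spike and sending one spike along each outgoing synapse; the only such synapse that concerns us here is $(T,G)$, which places one spike back into $\sigma_G$. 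Since $\sigma_G$ had zero spikes, we conclude ${\mathbb C}_{2t+2}$ has $\sigma_G$ with exactly one spike and $\sigma_T$ empty, closing the induction.

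The only subtle point is ruling out interference from the rest of the network, and this is handled once and for all by the two ``only synapse'' observations made at the start; no part of the argument depends on $KB$, on $I$, or on the current contents of the neurons $\sigma_1,\dots,\sigma_{2n+k}$, so no obstacle is expected.
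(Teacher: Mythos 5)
Your proof is correct and follows essentially the same route as the paper's: an induction on $t$ using the facts that $\sigma_G$ and $\sigma_T$ each carry only the rule $a\to a$ and that each receives spikes only from the other, so the pair oscillates independently of the rest of the system. The explicit isolation of the two ``only synapse'' observations at the outset is a slightly cleaner presentation of the same argument.
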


\begin{proof}
The result will be proved by induction. The lemma holds in the initial configuration. The inductive assumption is that in the configuration ${\mathbb C}_{2t}$, the neuron $\sigma_{T}$  does not contain spikes and the neuron $\sigma_{G}$ contains exactly one spike. There is only one incoming synapse in $\sigma_{G}$ which comes from $\sigma_{T}$, and vice versa. Furthermore, the unique rule that occurs in each neuron is $a\to a$ so, in ${\mathbb C}_{2t+1}$, $\sigma_{G}$ has consumed its spike and does not contain any spike, and the neuron $\sigma_{T}$ contains exactly one spike. For the same reasoning, in ${\mathbb C}_{2t+2}$, $\sigma_{T}$ has consumed its spike and the neuron $\sigma_{G}$ contains exactly one spike.
\end{proof}

\begin{lemma}
For all $t\ge 0$ the following results hold:
\begin{itemize}
\item  For all $p\in\{1,\dots, k\}$ the neuron $\sigma_{2n+p}$ is empty in the configuration ${\mathbb C}_{2t}$.
\item For all $q\in\{1,\dots, n\}$, the neuron $\sigma_{n+q}$ is empty in the configuration ${\mathbb C}_{2t+1}$.
\end{itemize}
\end{lemma}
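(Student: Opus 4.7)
The plan is to establish both statements simultaneously by induction on $t\ge 0$, exploiting the bipartite alternation in the computation: spikes accumulated in the variable layer $\sigma_{n+1},\ldots,\sigma_{2n}$ at even times are drained into the rule layer $\sigma_{2n+1},\ldots,\sigma_{2n+k}$ during the following odd step, and spikes accumulated in the rule layer at odd times are drained back into the variable layer during the following even step.

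For the base case $t=0$, the first statement is immediate from the initial configuration, since every rule neuron $\sigma_{2n+p}$ is initialised with zero spikes. For the second statement I would inspect the transition $\mathbb{C}_0\to\mathbb{C}_1$: the only synapses entering $\sigma_{n+q}$ come from the rule neurons (empty in $\mathbb{C}_0$) and, when $h_q=0$, from $\sigma_T$ (also empty in $\mathbb{C}_0$), so no fresh spikes arrive. Meanwhile the initial multiplicity $i_q$ is either $0$ or exactly the value ($h_q$ or $1$) that activates the unique firing rule of $\sigma_{n+q}$, hence any initial spikes are fully consumed and $\sigma_{n+q}$ is empty in $\mathbb{C}_1$.

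For the inductive step I would treat the first claim first. By the inductive hypothesis, $\sigma_{2n+p}$ is empty in $\mathbb{C}_{2t}$; during the transition $\mathbb{C}_{2t}\to\mathbb{C}_{2t+1}$ it can only receive spikes along synapses from variable neurons $\sigma_{n+i}$ with $p_i$ in the body of $r_p$, so its spike count in $\mathbb{C}_{2t+1}$ lies in $\{0,1,\ldots,b_p\}$. The crucial design feature is that the rule set of $\sigma_{2n+p}$ contains $a^l\to a$ for every $l\in\{1,\ldots,b_p\}$, so any positive count matches some firing rule and is entirely consumed in the next step, leaving $\sigma_{2n+p}$ empty in $\mathbb{C}_{2t+2}$.

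The second claim at step $t+1$ is where the main obstacle lies. By the inductive hypothesis, $\sigma_{n+q}$ is empty in $\mathbb{C}_{2t+1}$, so its population in $\mathbb{C}_{2t+2}$ equals the number of spikes delivered during the intervening step from its only predecessors: $\sigma_T$ when $h_q=0$ and the $h_q$ rule neurons whose rule has $p_q$ as head when $h_q>0$. When $h_q=0$ at most one spike arrives, and it is matched by $a\to a$ in the following step. The delicate case is $h_q>0$, where one must show that the spike count reaching $\sigma_{n+q}$ is compatible with the rigid firing rule $a^{h_q}\to a$; my plan is to carry a strengthened invariant alongside the induction, recording the exact spike multiplicity received by each variable neuron from the rule layer as a function of the previous configuration and the rule structure of $KB$. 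Establishing and maintaining that strengthened invariant, which is what links the SN P dynamics to the failure operator $F_{KB}$ and ensures that $a^{h_q}\to a$ is applicable precisely when the spikes arrive, is the main technical hurdle of the argument.
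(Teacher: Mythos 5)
Your induction scheme, your base case, and your treatment of the rule-layer neurons $\sigma_{2n+p}$ coincide with the paper's argument: the design point you cite --- that $\sigma_{2n+p}$ carries a rule $a^l\to a$ for every $l\in\{1,\dots,b_p\}$, so any positive count is matched and drained --- is exactly what the paper uses. The problem is the second half of the inductive step, which you explicitly leave open and, worse, propose to close in a direction that cannot work. You plan to strengthen the invariant so that the multiplicity delivered to a variable neuron $\sigma_{n+q}$ with $h_q>0$ is always $0$ or exactly $h_q$, making $a^{h_q}\to a$ applicable whenever spikes arrive. That invariant is false: $\sigma_{n+q}$ receives one spike for each rule with head $p_q$ whose rule-neuron was nonempty, i.e.\ whose body contains some currently ``true'' variable, and this number can be any value in $\{0,\dots,h_q\}$. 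The paper's own running example refutes it: in the $KB$ of Example \ref{Ex_1}, $p_5$ has $h_5=2$ (heads of $R_5$ and $R_7$), and in the iteration from $F_{KB}\downarrow 2=(0,0,0,0,0,1,1,0,0)$ to $F_{KB}\downarrow 3$ only the body of $R_7$ contains a true variable, so the neuron for $p_5$ receives exactly one spike (this is the single spike visible in $\sigma_{14}$ at ${\mathbb C}_6$ in Table \ref{Tab_1}).

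What actually saves the lemma is not a sharper invariant but the construction itself: the set $R_j$ of the neuron $\sigma_{n+j}$ is announced as ``the set of $h_j$ rules'', i.e.\ besides the firing rule $a^{h_j}\to a$ it is meant to contain the forgetting rules $a^l\to\lambda$ for $1\le l<h_j$, which erase any intermediate count in one step without emitting a spike (consistently, the spike in $\sigma_{14}$ at ${\mathbb C}_6$ vanishes at ${\mathbb C}_7$ while $\sigma_5$ stays at $0$). With the literal singleton $\{a^{h_j}\to a\}$ no rule would be applicable to an intermediate count, the spikes would persist and accumulate, and the lemma would be false. So the gap in your proposal is genuine: the step you defer is precisely the one that needs the forgetting rules, and the invariant you would try to prove instead does not hold. (The paper's own write-up also glosses over this point --- it merely asserts that ``all of them are consumed by the corresponding rule'' --- but it is at least relying on the right mechanism.)
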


\begin{proof}
In the configuration ${\mathbb C}_0$, for all $p\in\{1,\dots, k\}$, the neuron $\sigma_{2n+p}$ is empty and, for all $q\in\{1,\dots, n\}$, each neuron $\sigma_{n+q}$ contains, at most, $h_q$ spikes. These spikes are consumed by the application of the rule $a^{h_j} \to a$ (or $a\to a$). Finally, as every neuron with synapse to $\sigma_{n+q}$ is empty at ${\mathbb C}_0$, it follows that in the configuration ${\mathbb C}_1$, all the neurons $\sigma_{n+q}$ with $q\in\{1,\dots, n\}$ are empty.

As induction hypothesis, we state that in ${\mathbb C}_{2t}$, for all $p\in\{1,\dots, k\}$, the neuron $\sigma_{2n+p}$ is empty and for all $q\in\{1,\dots, n\}$, the neuron $\sigma_{n+q}$ is empty in the configuration ${\mathbb C}_{2t+1}$. As defined before, the number of incoming synapses in each neuron $\sigma_{j}$ is $b_j$. The neurons which are the origin of such synapses send (at most) one spike in one computational step, so in ${\mathbb C}_{2t+1}$, the number of spikes in the neuron $\sigma_{2n+p}$ is, at most, $b_{p}$. The corresponding rules ($a^{h_j} \to a$ or $a\to a$) consume all these spikes so, at ${\mathbb C}_{2t+1}$, all the neurons with outgoing synapses to $\sigma_{2n+p}$ are empty. In the next step, at most, $b_{p}$ spikes contained in the neurons $\sigma_{2n+p}$ were consumed by the corresponding rules. Therefore,  we conclude that at ${\mathbb C}_{2t+2}$, for all $p\in\{1,\dots, k\}$, the neurons $\sigma_{2n+p}$ are empty. 

Focusing on the second part of the lemma, as induction hypothesis we state that the neurons $\sigma_{n+q}$ with $q\in\{1,\dots, n\}$ are empty in the configuration ${\mathbb C}_{2t+1}$. Each neuron $\sigma_{n+q}$ can receive at most $h_q$ if $h_q>1$ and 1 if $h_q=0$, since there are $h_q$ or 1 incoming synapses and each of these sends, at most, one spike. Hence, at  ${\mathbb C}_{2t+2}$, $\sigma_{n+q}$ has, at most, $h_q$ if $h_q>1$ and 1 if $h_q=0$, spikes. All of them are consumed by the corresponding rule and, since all the neurons which can send spikes to $\sigma_{n+q}$ are empty at ${\mathbb C}_{2t+2}$, we conclude that, for all $q\in\{1,\dots, n\}$, the neuron $\sigma_{n+q}$ is empty in the configuration ${\mathbb C}_{2t+3}$.
\end{proof}

\begin{lemma}
For all $q\in\{1,\dots, n\}$, the neuron $\sigma_{q}$ is empty in the configuration ${\mathbb C}_{2t}$.
\end{lemma}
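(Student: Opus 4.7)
My plan is to prove this by induction on $t$, leaning directly on the second part of the previous lemma. The base case $t=0$ is immediate from the construction: each neuron $\sigma_q$ with $q\in\{1,\dots,n\}$ is defined as $(0,\{a\to\lambda\})$, so it contains $0$ spikes in the initial configuration ${\mathbb C}_0$.

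For the inductive step, I would first read off from the synapse list that each $\sigma_q$ with $q\le n$ has exactly one incoming synapse, namely $(n+q,q)$. Moreover, the only rule that $\sigma_{n+q}$ can apply ($a\to a$ or $a^{h_q}\to a$) emits a single spike. Hence, assuming as induction hypothesis that $\sigma_q$ is empty at ${\mathbb C}_{2t}$, the neuron $\sigma_q$ can accumulate at most one spike during the transition to ${\mathbb C}_{2t+1}$, so $\sigma_q$ contains $0$ or $1$ spikes in that configuration.

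I would then analyse the transition from ${\mathbb C}_{2t+1}$ to ${\mathbb C}_{2t+2}$. If $\sigma_q$ holds exactly one spike at ${\mathbb C}_{2t+1}$, the forgetting rule $a\to\lambda$ is applicable and removes it. On the other hand, by the second part of the previous lemma, the only potential source of a new spike, namely $\sigma_{n+q}$, is empty at ${\mathbb C}_{2t+1}$ and therefore cannot fire during this step. Consequently $\sigma_q$ receives no new spike and becomes empty at ${\mathbb C}_{2t+2}$, which closes the induction.

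I do not expect any genuine obstacle here: the delicate synchronisation between the deposit of a spike at an odd step and its removal at the following even step is already encoded in the parity statement of the previous lemma, and the uniqueness of the incoming synapse into $\sigma_q$ keeps the bookkeeping trivial. The only care needed is to be explicit about the fact that the forgetting rule $a\to\lambda$ requires exactly one spike to fire, which matches precisely the upper bound of $1$ spike established in the previous paragraph.
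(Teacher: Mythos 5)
Your proposal is correct and follows essentially the same route as the paper: induction starting from ${\mathbb C}_0$, using the single incoming synapse $(n+q,q)$ to bound $\sigma_q$ at one spike in ${\mathbb C}_{2t+1}$, the rule $a\to\lambda$ to consume it, and the second part of the previous lemma (that $\sigma_{n+q}$ is empty at ${\mathbb C}_{2t+1}$) to guarantee no new spike arrives. Your version is in fact slightly more careful than the paper's, since you make explicit both the use of the induction hypothesis at ${\mathbb C}_{2t}$ and the fact that the forgetting rule fires only on exactly one spike, which matches the established upper bound.
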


\begin{proof}
In the first configuration (${\mathbb C}_0$) the lemma holds. For ${\mathbb C}_{2t}$ with $t>0$ it is enough to check that, as stated in Lemma 2, for all $q\in\{1,\dots, n\}$, the neuron $\sigma_{n+q}$ is empty in the configuration ${\mathbb C}_{2t+1}$ and each $\sigma_q$ receives at most one spike in each computation step from the corresponding $\sigma_{n+q}$. Therefore, in each configuration ${\mathbb C}_{2t+1}$, each neuron $\sigma_q$ contains, at most, one spike. Since such spike is consumed by the rule $a\to \lambda$ and no new spike arrives, then the neuron $\sigma_{q}$ is empty in the configuration ${\mathbb C}_{2t}$.
\end{proof}


\begin{lemma}
Let $I=(i_1,\dots,i_n)$ be an interpretation for $KB$ and let $S=(s_1,\dots,s_n)$ be a vector with the following properties. For all $j\in\{1,\dots, n\}$
\begin{itemize}
\item If $i_j=0$ and $h_j=0$, then $s_j=0$ 
\item If $i_j=0$ and $h_j>0$, then $s_j\in\{0,\dots , h_j-1\}$.
\item If $i_j\not=0$ and $h_j=0$, then $s_j=1$
\item If $i_j\not=0$ and $h_j>0$, then $s_j=h_j$
\end{itemize}
If in the configuration ${\mathbb C}_{2t}$ the neuron $\sigma_{n+j}$ contains exactly $s_j$ spikes for all $j\in\{1,\dots, n\}$ then the interpretation obtained by applying the failure operator $F_{KB}$ to the interpretation $I$, $F_{KB}(I)$, is $(q_1,\dots,q_n)$ where $q_j$, $j\in\{1,\dots,n\}$, corresponds with the number of spikes contained in the neuron $\sigma_j$ in the configuration ${\mathbb C}_{2t+3}$.
\end{lemma}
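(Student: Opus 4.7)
The plan is to trace the SN P system through the three consecutive transitions from ${\mathbb C}_{2t}$ to ${\mathbb C}_{2t+3}$ and identify each step with a distinct phase in the computation of $F_{KB}(I)$. Throughout I will freely invoke Lemmas 1--3 for the invariants on the clock neurons $\sigma_G, \sigma_T$ and for the emptiness of variable and output neurons at the appropriate parities of the step counter.

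In the first transition, ${\mathbb C}_{2t} \to {\mathbb C}_{2t+1}$, the system reads off $I$: the correspondence between $s_j$ and $i_j = I(p_j)$ postulated in the hypothesis is arranged precisely so that the rule $a^{h_j}\to a$ (or $a\to a$ when $h_j = 0$) can fire at $\sigma_{n+j}$ iff $I(p_j) = 1$, and every such firing sends one spike to each rule neuron $\sigma_{2n+r}$ whose body contains $p_j$, together with an auxiliary spike to $\sigma_j$ that will be erased one step later by $a\to\lambda$. Meanwhile $\sigma_G$ sends a spike to $\sigma_T$. In the second transition, ${\mathbb C}_{2t+1}\to {\mathbb C}_{2t+2}$, each $\sigma_{2n+r}$ holds $c_r = |\{p_i \in \mathrm{body}(r_r) : I(p_i) = 1\}|$ spikes, and its rule set $\{a^l\to a : 1 \le l \le b_r\}$ makes it fire exactly when $c_r \ge 1$, dispatching one spike to $\sigma_{n+q}$ where $p_q$ is the head of $r_r$; simultaneously $\sigma_T$ fires, returning one spike to $\sigma_G$ and delivering one spike to every $\sigma_{n+i}$ with $h_i = 0$, so as to realize the vacuous case of the failure operator for variables that never appear in any head.

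In the third transition, ${\mathbb C}_{2t+2}\to {\mathbb C}_{2t+3}$, these per-rule signals are aggregated into $F_{KB}(I)$. By Lemma 2, $\sigma_{n+q}$ was empty at ${\mathbb C}_{2t+1}$, so at ${\mathbb C}_{2t+2}$ it contains exactly $d_q$ spikes, where $d_q$ counts the rules with head $p_q$ whose body has a true literal under $I$ (or exactly one spike from $\sigma_T$ when $h_q = 0$). The rule $a^{h_q}\to a$ fires iff $d_q = h_q$, which is the defining condition $F_{KB}(I)(p_q) = 1$; for $h_q = 0$ the rule $a\to a$ always fires, matching the vacuous truth $F_{KB}(I)(p_q) = 1$. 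Each such firing deposits exactly one spike in $\sigma_q$, and since Lemma 3 ensures $\sigma_q$ is empty at ${\mathbb C}_{2t+2}$, the count $q_j$ at ${\mathbb C}_{2t+3}$ equals $F_{KB}(I)(p_j)$, as required. The main obstacle lies in this third step: one must verify that the exact-match semantics of $a^{h_q}\to a$ faithfully encodes the universal quantifier in the definition of $F_{KB}$, which is achieved by the choice of threshold $h_q$ together with the emptiness invariants supplied by Lemmas 2 and 3.
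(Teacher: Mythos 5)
Your proof is correct and follows essentially the same route as the paper's: a step-by-step trace of the three transitions from ${\mathbb C}_{2t}$ to ${\mathbb C}_{2t+3}$, invoking Lemmas 1--3 for the emptiness invariants and splitting into the $h_j=0$ case (handled by the $\sigma_G$/$\sigma_T$ clock) and the $h_j>0$ case (handled by counting, in $\sigma_{n+j}$, the spikes emitted by the rule neurons with head $p_j$). If anything, your version is slightly more complete, since you argue the equivalence ``$\sigma_{n+q}$ fires iff $d_q=h_q$ iff $F_{KB}(I)(p_q)=1$'' in both directions, whereas the paper's proof only explicitly treats the case $F_{KB}(I)(p_m)=1$.
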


\begin{proof}
Let us consider $m\in\{1,\dots,n\}$ and $F_{KB}(I)(p_m)=1$. We will prove that in the configuration ${\mathbb C}_{2t+3}$ there is exactly one spike in the neuron $\sigma_m$.

If $F_{KB}(I)(p_m)=1$, then for each rule $r_l\equiv L_{d_1} \wedge \dots \wedge L_{d_l} \to p_m$ with $p_m$ in the head, there exists $j\in\{1,\dots,n\}$ such that $I(L_j)=1$.

{\bf Case 1:} Let us consider that there is no such rule $r_l$. By construction, the neuron $\sigma_{n+m}$ has only one incoming synapse from neuron $\sigma_T$; and according to the previous lemmas:
\begin{itemize}
\item In ${\mathbb C}_{2t}$ the neuron $\sigma_G$ contains exactly one spike.
\item For all $q\in\{1,\dots, n\}$, the neuron $\sigma_{n+q}$ is empty in the configuration ${\mathbb C}_{2t+1}$
\item For all $q\in\{1,\dots, n\}$, the neuron $\sigma_{q}$ is empty in the configuration ${\mathbb C}_{2t}$.
\end{itemize}

In these conditions, the corresponding rules in $\sigma_G$ and $\sigma_{n+m}$ are fired and in ${\mathbb C}_{2t+1}$, the neuron $\sigma_{T}$ contains one spike. In ${\mathbb C}_{2t+2}$, the neuron $\sigma_{n+m}$ contains one spike and $\sigma_m$ is empty. Finally, in the next step $\sigma_{n+m}$ sends one spike to $\sigma_m$, so, in ${\mathbb C}_{2t+3}$, $\sigma_m$ contains one spike.

{\bf Case 2:} Let us now consider that there are $h_m$ rules (with $h_m > 0$) such that $r_l\equiv L_{d_1}\wedge \dots L_{d_l}\to p_m$ and for each one there exists $j_l\in\{1,\dots,n\}$ such that $I(L_{j_l})=1$. This means that, in ${\mathbb C}_{2t}$, every neuron $\sigma_{n+j_l}$ contains 1 or $h_{j_l}$ spikes, as appropriate. All these neurons fire the corresponding rules, and, in ${\mathbb C}_{2t+1}$, every $\sigma_{2n+l}$ has at least one spike. So one rule from $\{ a^q \to a \,|\, q \in\{1,\dots, b_{l}\}\,\}$ is fired in every $\sigma_{2n+l}$ and in ${\mathbb C}_{2t+2}$ the neuron $\sigma_{n+m}$ contains exactly $h_m$ spikes. The corresponding rule fires and the neuron $\sigma_m$ contains one spike in ${\mathbb C}_{2t+3}$.

\end{proof}

Finally, the proof of the Th. \ref{Th_3} is provided. It is immediate from Lemma 4.

\vspace{2mm}

{\it Proof.} Let us note that one of the possible vectors $S=(s_1,\dots,s_n)$ obtained from the interpretation $I$ is exactly the same interpretation $I=(i_1,\dots,i_n)$. If we also consider the case when $t=0$, we have proved that from the initial configuration ${\mathbb C}_0$ where $i_k$ and $h_k$ indicates the number of spikes in the neuron $\sigma_{n+k}$, then the configuration ${\mathbb C}_3$ encodes $F_{KB}(I)$.

\end{proof}

Theorem \ref{Th_3} is the basis of the two main results of this paper, which are proved in the following theorems.

\begin{theorem}\label{Th_5}
Let $KB$ be a deductive database on the set of variables $\{p_1,\dots, p_k\}$. An SN P system can be constructed from $KB$ such that it computes the inference rule $CWA$ on the database $KB$. 
\end{theorem}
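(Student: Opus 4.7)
The plan is to combine three ingredients already available in the paper: Theorem \ref{Th_2}, which equates CWA-negation of $p_k$ with $F_{KB}\!\uparrow\!\omega(p_k)=1$; Theorem \ref{Th_3}, which shows how to simulate one application of $F_{KB}$ on a ``working memory'' encoded in the neurons $\sigma_{n+1},\dots,\sigma_{2n}$; and Proposition \ref{Cor_1}(a), which guarantees that the ascending chain $F_{KB}\!\uparrow\! n$ stabilizes after some finite stage $N$. Thus it suffices to build a single SN P system whose $\sigma_1,\dots,\sigma_n$ neurons encode the successive iterates $F_{KB}\!\uparrow\! t$, and to read off the answer once the configuration becomes periodic.

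First, I would take the system $\Pi_{KB}$ from Theorem \ref{Th_3} and initialise it so that the neurons $\sigma_{n+1},\dots,\sigma_{2n}$ hold the spike pattern associated with $I_\uparrow=(1,\dots,1)$, namely $1$ spike when $h_j=0$ and $h_j$ spikes when $h_j>0$. By Theorem \ref{Th_3}, the configuration ${\mathbb C}_3$ will then encode $F_{KB}(I_\uparrow)=F_{KB}\!\uparrow\!1$ in the output neurons $\sigma_1,\dots,\sigma_n$. To continue the iteration, I would augment $\Pi_{KB}$ with a small ``feedback layer'' of auxiliary neurons $\sigma'_1,\dots,\sigma'_n$ together with synapses $(j,\sigma'_j)$ and $(\sigma'_j, n+j)$, and with rules in $\sigma'_j$ of the form $a\to a^{h_j}$ (or $a\to a$ when $h_j=0$). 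In this way each spike that reaches $\sigma_j$ at an odd configuration ${\mathbb C}_{2t+3}$ is re-encoded into the proper multiplicity of spikes inside $\sigma_{n+j}$ two steps later, restoring the invariant required by Lemma 4 for the next round.

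Second, I would prove by induction on $t$ that, for a suitable offset $c$ (depending on the length of the feedback loop, which is constant), the configuration ${\mathbb C}_{c+2t}$ satisfies the hypothesis of Lemma 4 with $I = F_{KB}\!\uparrow\!t$, and consequently ${\mathbb C}_{c+2t+3}$ contains exactly $F_{KB}\!\uparrow\!(t{+}1)$ in $\sigma_1,\dots,\sigma_n$. The base case follows from the initialisation just described, and the inductive step uses Lemma 4 verbatim together with the fact that the extra neurons $\sigma'_j$ behave deterministically and leave the original invariants established in Lemmas 1--3 unchanged. By Proposition \ref{Cor_1}(a) there is a finite $N$ with $F_{KB}\!\uparrow\!N=F_{KB}\!\uparrow\!\omega$, so after $O(N)$ computation steps the spike pattern in $\sigma_1,\dots,\sigma_n$ on successive cycles becomes constant. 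Combining with Theorem \ref{Th_2}, a variable $p_k$ satisfies CWA (i.e.\ $\neg p_k$ is inferred) if and only if $\sigma_k$ fires in this stable cycle.

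The main obstacle I expect is the feedback step: Lemma 4 requires very specific spike counts in $\sigma_{n+j}$ ($0$ or $h_j$ when $h_j>0$, and $0$ or $1$ when $h_j=0$), so the re-encoding layer must reproduce exactly this multiplicity and must do so synchronously with the global clock, without disturbing the one-spike ``beat'' of the $\sigma_G,\sigma_T$ control neurons proved in Lemma 1. The cleanest way I see is to route the feedback through an intermediate neuron for each $j$ with a single firing rule $a\to a^{h_j}$, and to check that its added latency shifts the invariants of Lemmas 1--3 uniformly by a constant, so that no race condition or spurious accumulation of spikes appears. Once this bookkeeping is verified, the result follows by concatenating the bounded iteration with Theorems \ref{Th_2} and \ref{Th_3}.
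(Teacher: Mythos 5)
Your overall strategy --- reduce CWA to $F_{KB}\uparrow\omega$ via Theorem \ref{Th_2}, iterate the failure operator starting from $I_\uparrow$ using the system of Theorem \ref{Th_3}, and stop at the finite stabilization index given by Proposition \ref{Cor_1}(a) --- is exactly the paper's. The gap is in how you realize the iteration: the system $\Pi_{KB}$ of Theorem \ref{Th_3} \emph{already contains} the feedback you propose to add. The synapses $(2n+j,n+i)$ route each rule neuron's spike back to the working neuron of its head variable, and the synapses $(T,n+i)$ re-seed the working neurons of the variables with $h_i=0$ on every beat of the $\sigma_G/\sigma_T$ clock. This is precisely why Lemmas 1--4 are stated and proved for arbitrary $t$ rather than only for $t=0$: the hypothesis of Lemma 4 is re-established at every even configuration ${\mathbb C}_{2t}$ by the system's own dynamics. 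The paper's proof of Theorem \ref{Th_5} is then just the induction $F_{KB}\uparrow z={\mathbb C}_{2z+1}[1,\dots,n]$ on the \emph{unmodified} system, with the answer read off at ${\mathbb C}_{2m+1}$ once Proposition \ref{Cor_1} guarantees stabilization.

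Your extra layer $\sigma'_1,\dots,\sigma'_n$ is therefore not only unnecessary but would break the construction as described. First, the output neurons $\sigma_j$ carry only the forgetting rule $a\to\lambda$, which consumes spikes without emitting any, so the synapse $(j,\sigma'_j)$ transmits nothing unless you also change those rules (and then $\sigma_j$ no longer holds the clean $0/1$ answer the theorem reads off). Second, if the loop did transmit, $\sigma_{n+j}$ would receive $h_j$ spikes from $\sigma'_j$ \emph{in addition to} the up-to-$h_j$ spikes it already receives from the rule neurons; a count such as $2h_j$ matches no regular expression among the rules $a^{h_j}\to a$ (whose language is exactly $\{a^{h_j}\}$), so the neuron jams and spikes accumulate, destroying the invariant of Lemma 2. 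Third, the detour $\sigma_{n+j}\to\sigma_j\to\sigma'_j\to\sigma_{n+j}$ is two steps longer than the existing path through the rule neurons, so the two feeds arrive at configurations of opposite parity relative to the $\sigma_G/\sigma_T$ beat. The repair is simple: delete the feedback layer from your plan and instead observe, as the paper does, that Lemmas 1--4 already propagate the iterates $F_{KB}\uparrow t$ through the even configurations indefinitely.
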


\begin{proof}

According to Th. \ref{Th_2}, $\neg p_k$ is inferred from $KB$ by using the inference rule $CWA$ if and only is $F_{KB}\uparrow \omega (p_k)=1$ and from Th. \ref{Th_3}, an SN P system can be constructed from $KB$ such that $F_{KB}(I)= {\mathbb C}_3[1,\dots,n]$ where ${\mathbb C}_3$ is the configuration of the SN P system after the third step of computation. By combining both results, we will prove

$$(\forall z \ge 1)\, F_{KB}\uparrow z = {\mathbb C}_{2z+1}[1,\dots,n]$$ 

\noindent where ${\mathbb C}_{2z+1}[1,\dots,n]$ is the vector whose components are the spikes on the neurons $\sigma_1,\dots, \sigma_n$ in the configuration  $ {\mathbb C}_{2z+1}$. We will prove it by induction.

For $z=1$, we will see that $F_{KB}\uparrow 1 = F_{KB}(F_{KB}\uparrow 0)=F_{KB}(I_\uparrow)$ is the vector whose components are the spikes on the neurons $\sigma_1,\dots, \sigma_n$ in the configuration  ${\mathbb C}_3$. The result holds from {\it Lemma 4} in the proof of Th. \ref{Th_3}. By induction, let us consider now that $F_{KB}\uparrow z = {\mathbb C}_{2z+1}[1,\dots,n]$ holds. As previously stated, this means that in the previous configuration ${\mathbb C}_{2z}$ the spikes in the neurons $\sigma_{n+1},\dots, \sigma_{2n}$ can be represented as a vector $S=(s_1,\dots,s_n)$ with the properties claimed in {\it Lemma 4}, namely, if the neuron $\sigma_j$ has no spikes in ${\mathbb C}_{2z+1}$, then $s_j=0$ or $s_j\in\{0,\dots, h_j-1\}$, as corresponds, and, if the neuron $\sigma_j$ has one spike in ${\mathbb C}_{2z+1}$, then $s_j=1$ or $s_j=h_j$, as appropriate. Hence, according to {\it Lemma 4}, three computational steps after ${\mathbb C}_{2z}$,  $F_{KB}({\mathbb C}_{2z+1}[1,\dots,n])$ is computed: $$F_{KB}\uparrow z+1 = F_{KB}(F_{KB}\uparrow z)= F_{KB}({\mathbb C}_{2z+1}[1,\dots,n])={\mathbb C}_{2z+3}[1,\dots,n]$$

From corollary \ref{Cor_1}, there exists $m \in \mathbb{N}$ such that $F_{KB}\uparrow m = F_{KB}\uparrow k$ for all $k \geq m$, i.e. $F_{KB}\uparrow m = F_{KB}\uparrow \omega$. So the vector whose components are the spikes on the neurons $\sigma_1 , ... , \sigma_n$ in the configuration $\mathbb{C}_{2m+1}$ is the result obtained by applying the inference rule $CWA$. \hfill $\Box$ 


\end{proof}

The previous proof can be adapted to prove that the SN P systems also can  characterize the inference rule {\it Negation of Failure Set}.

\begin{theorem}\label{Th_4}
Let $KB$ be a deductive database on the set of variables $\{p_1,\dots, p_k\}$. An SN P system can be constructed from $KB$ such that it computes the inference rule {\it Negation of Failure Set} on the database $KB$. 
\end{theorem}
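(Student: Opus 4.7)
The plan is to reuse verbatim the SN P system construction given in the proof of Theorem \ref{Th_3} and to mimic the inductive scheme of Theorem \ref{Th_5}, simply swapping the upward iteration of $F_{KB}$ (starting from $I_\uparrow$) for the downward iteration (starting from $I_\downarrow$). The logical hinge that replaces Theorem \ref{Th_2} in this adapted argument is Theorem \ref{Th_1}: a variable $p_k$ is in the failure set of $KB$ (equivalently, $\neg p_k$ is inferred by Negation of Failure Set) if and only if $F_{KB}\downarrow\omega(p_k)=1$. So it is enough to build an SN P system whose configurations record the successive values of $F_{KB}\downarrow z$ until stabilisation.

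Concretely, I would take the system $\Pi_{KB}$ of Theorem \ref{Th_3} and instantiate it with the interpretation $I=I_\downarrow$. Because $I_\downarrow(p_j)=0$ for every $j$, the initialisation rule gives $i_j=0$ in every neuron $\sigma_{n+j}$, whether $h_j=0$ or $h_j>0$; all these neurons therefore start empty, only $\sigma_G$ carries its single ``clock'' spike, and Lemmas 1--3 in the proof of Theorem \ref{Th_3} continue to hold without modification.

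The inductive claim to establish is
\[
 (\forall z \ge 1)\ F_{KB}\downarrow z \ =\ {\mathbb C}_{2z+1}[1,\dots,n].
\]
The base case $z=1$ is an immediate application of Lemma 4 in the proof of Theorem \ref{Th_3} with $I=I_\downarrow$ and $t=0$: the vector $S=(0,\dots,0)$ trivially satisfies the hypothesis, so ${\mathbb C}_3[1,\dots,n]=F_{KB}(I_\downarrow)=F_{KB}\downarrow 1$. For the inductive step, assuming the equality at $z$, I would argue that at configuration ${\mathbb C}_{2z+2}$ the spike counts in $\sigma_{n+1},\dots,\sigma_{2n}$ form a vector $S$ of the form required by Lemma 4 and encoding exactly the interpretation recorded by $\sigma_1,\dots,\sigma_n$ at ${\mathbb C}_{2z+1}$; then Lemma 4 applied with $t=z+1$ yields $F_{KB}\downarrow(z+1)=F_{KB}({\mathbb C}_{2z+1}[1,\dots,n])={\mathbb C}_{2z+3}[1,\dots,n]$. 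This is the only genuinely delicate step: one has to check that the firing mechanics inside the body neurons $\sigma_{2n+j}$ place either $0$ or exactly $h_j$ spikes in each $\sigma_{n+j}$ at the right tick, but this is already the content of Lemma 4 and so the induction goes through without surprises.

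Finally, by Proposition \ref{Cor_1}(b) there exists $m\in\mathbb{N}$ with $F_{KB}\downarrow m = F_{KB}\downarrow k$ for all $k\ge m$, hence $F_{KB}\downarrow m = F_{KB}\downarrow\omega$. The neurons $\sigma_1,\dots,\sigma_n$ at configuration ${\mathbb C}_{2m+1}$ therefore encode $F_{KB}\downarrow\omega$, and by Theorem \ref{Th_1} this vector has a $1$ in coordinate $k$ precisely when $p_k$ belongs to the failure set, i.e., precisely when Negation of Failure Set infers $\neg p_k$. The main obstacle is not a new technical difficulty but the careful bookkeeping of the invariant linking ${\mathbb C}_{2z}$ and ${\mathbb C}_{2z+1}$; once this is spelled out, the theorem follows by a direct transcription of the proof of Theorem \ref{Th_5}.
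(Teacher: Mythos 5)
Your proposal is correct and follows the paper's own proof of Theorem~\ref{Th_4} essentially verbatim: the same SN P system $\Pi_{KB}$ from Theorem~\ref{Th_3} initialized with $I_\downarrow$, the same inductive invariant $F_{KB}\downarrow z = {\mathbb C}_{2z+1}[1,\dots,n]$ established via Lemma~4, and the same appeal to Theorem~\ref{Th_1} and Proposition~\ref{Cor_1}(b) to conclude at ${\mathbb C}_{2m+1}$. The only thing to repair is an off-by-one in your inductive step: the vector $S$ encoding $F_{KB}\downarrow z = {\mathbb C}_{2z+1}[1,\dots,n]$ sits in $\sigma_{n+1},\dots,\sigma_{2n}$ at configuration ${\mathbb C}_{2z}$, not at ${\mathbb C}_{2z+2}$ (where those neurons already encode $F_{KB}\downarrow (z+1)$), so Lemma~4 must be applied with $t=z$, which is precisely what places $F_{KB}\downarrow(z+1)$ at ${\mathbb C}_{2z+3}$ as you claim; applied with $t=z+1$ it would place the output at ${\mathbb C}_{2z+5}$ and your displayed chain of equalities would not follow.
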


\begin{proof}
According to Th. \ref{Th_1}, $\neg p_k$ is inferred from $KB$ by using the inference rule {\it Negation of Failure Set} if and only is $F_{KB}\downarrow \omega (p_k)=1$ and from Th. \ref{Th_3}, an SN P system can be constructed from $KB$ such that $F_{KB}(I)= {\mathbb C}_3[1,\dots,n]$ where ${\mathbb C}_3$ is the configuration of the SN P system after the third step of computation. By combining both results, we will prove

$$(\forall z \ge 1)\, F_{KB}\downarrow z = {\mathbb C}_{2z+1}[1,\dots,n]$$ 

\noindent where ${\mathbb C}_{2z+1}[1,\dots,n]$ is the vector whose components are the spikes on the neurons $\sigma_1,\dots, \sigma_n$ in the configuration  $ {\mathbb C}_{2z+1}$. We will prove it by induction.

For $z=1$, we have to prove that $F_{KB}\downarrow 1 = F_{KB}(F_{KB}\downarrow 0)=F_{KB}(I_\downarrow)$ is the vector whose components are the spikes on the neurons $\sigma_1,\dots, \sigma_n$ in the configuration  ${\mathbb C}_3$. The result holds from {\it Lemma 4} in the proof of Th. \ref{Th_3}. By induction, let us consider now that $F_{KB}\downarrow z = {\mathbb C}_{2z+1}[1,\dots,n]$ holds. As previously stated, this means that in the previous configuration ${\mathbb C}_{2z}$ the spikes in the neurons $\sigma_{n+1},\dots, \sigma_{2n}$ can be represented as a vector $S=(s_1,\dots,s_n)$ with the properties claimed in {\it Lemma 4}, namely, if the neuron $\sigma_j$ has no spikes in ${\mathbb C}_{2z+1}$, then $s_j=0$ or $s_j\in\{0,\dots, h_j-1\}$, as corresponds, and, if the neuron $\sigma_j$ has one spike in ${\mathbb C}_{2z+1}$, then $s_j=1$ or $s_j=h_j$, as appropriate. Hence, according to {\it Lemma 4}, three computation steps after ${\mathbb C}_{2z}$,  $F_{KB}({\mathbb C}_{2z+1}[1,\dots,n])$ is computed: $$F_{KB}\downarrow z+1 = F_{KB}(F_{KB}\downarrow z)= F_{KB}({\mathbb C}_{2z+1}[1,\dots,n])={\mathbb C}_{2z+3}[1,\dots,n]$$

From corollary \ref{Cor_1}, there exists $m \in \mathbb{N}$ such that $F_{KB}\downarrow m = F_{KB}\downarrow k$ for all $k \geq m$, i.e. $F_{KB}\downarrow m = F_{KB}\downarrow \omega$. So the vector whose components are the spikes on the neurons $\sigma_1 , ... , \sigma_n$ in the configuration $\mathbb{C}_{2m+1}$ is the result obtained by applying the inference rule {\it Negation of Failure Set}. \hfill $\Box$ 

\end{proof}

\begin{example}\label{Ex_????}
Let us consider the deductive database $KB$ from Example \ref{Ex_1} and the SN P system associated to $KB$.  Its graphical representation is shown in Fig. \ref{snps_1}.

\begin{figure}[t]
\begin{center}
\includegraphics[scale=0.3]{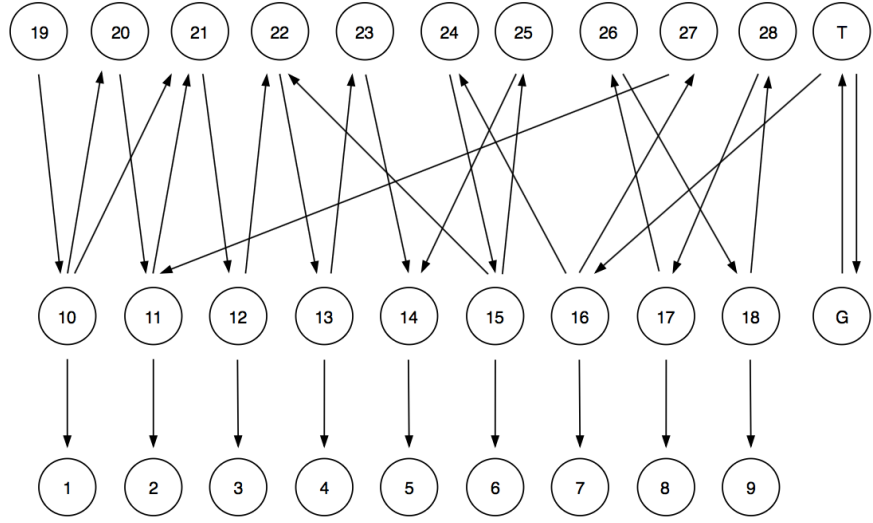}
\caption{Graphical representation of the synapses of the SN P system obtained from Example \ref{Ex_1}.}\label{snps_1}
\end{center}
\end{figure}
\end{example}

All steps of the computation (downwards and upwards) are shown in Table \ref{Tab_1}. Note that in Table \ref{Tab_1} the solution of applying failure operator every step is codified on the neurons $\sigma_1 , ... , \sigma_n$ (\textit{grey cells}).

\setlength{\tabcolsep}{1.5pt}
{ \renewcommand{\arraystretch}{0.8}
\begin{table}
\begin{tabular}{ccc}
\begin{tabular}{|c|c|c|c|c|c|c|c|c|c|c|}
\hline
 & $\mathbb{C}_0$ & $\mathbb{C}_1$ &$\mathbb{C}_2 $& $\mathbb{C}_3 $&$ \mathbb{C}_4$ &$ \mathbb{C}_5 $&$ \mathbb{C}_ 6$ & $\mathbb{C}_ 7 $&$ \mathbb{C}_8 $&$ \mathbb{C}_9 $\\\hline
$\sigma_1   $  & 0 & \cellcolor{gray!25}0 & 0 & \cellcolor{gray!25}0 & 0 &\cellcolor{gray!25} 0 & 0 & \cellcolor{gray!25}0& 0  & \cellcolor{gray!25}0 \\\hline
$\sigma_2 $    & 0 & \cellcolor{gray!25}0 & 0 & \cellcolor{gray!25}0 & 0 &\cellcolor{gray!25} 0 & 0 & \cellcolor{gray!25}0& 0  & \cellcolor{gray!25}0 \\\hline
$\sigma_3 $   & 0 & \cellcolor{gray!25}0 & 0 & \cellcolor{gray!25}0 & 0 &\cellcolor{gray!25} 0 & 0  & \cellcolor{gray!25}0& 0  & \cellcolor{gray!25}0 \\\hline
$\sigma_4  $   & 0 & \cellcolor{gray!25}0 & 0 & \cellcolor{gray!25}0 & 0 & \cellcolor{gray!25}0 & 0  & \cellcolor{gray!25}1& 0  & \cellcolor{gray!25}1 \\\hline
$\sigma_5  $   & 0 & \cellcolor{gray!25}0 & 0 &\cellcolor{gray!25} 0 & 0 &\cellcolor{gray!25} 0 & 0  & \cellcolor{gray!25}0& 0  & \cellcolor{gray!25}1 \\\hline
$\sigma_6  $   & 0 & \cellcolor{gray!25}0 & 0 & \cellcolor{gray!25}0 & 0 & \cellcolor{gray!25}1 & 0  & \cellcolor{gray!25}1& 0  & \cellcolor{gray!25}1 \\\hline
$\sigma_7 $    & 0 & \cellcolor{gray!25}0 & 0 & \cellcolor{gray!25}1 & 0 & \cellcolor{gray!25}1 & 0  &\cellcolor{gray!25}1& 0  & \cellcolor{gray!25}1 \\\hline
$\sigma_8  $   & 0 & \cellcolor{gray!25}0 & 0 & \cellcolor{gray!25}0 & 0 & \cellcolor{gray!25}0 & 0  & \cellcolor{gray!25}0& 0  & \cellcolor{gray!25}0 \\\hline
$\sigma_9   $  & 0 & \cellcolor{gray!25}0 & 0 & \cellcolor{gray!25}0 & 0 & \cellcolor{gray!25}0 & 0  & \cellcolor{gray!25}0 & 0  & \cellcolor{gray!25}0 \\\hline
$\sigma_{10}$ & 0 & 0 & 0 & 0 & 0 & 0 & 0 & 0& 0& 0 \\\hline
$\sigma_{11}$ & 0 & 0 & 0 & 0 & 1 & 0 & 1 & 0& 1 & 0\\\hline
$\sigma_{12}$ & 0 & 0 & 0 & 0 & 0 & 0  & 0 & 0& 0& 0 \\\hline
$\sigma_{13}$ & 0 & 0 & 0 & 0 & 0 & 0  & 1 & 0 &1 &0\\\hline
$\sigma_{14}$ & 0 & 0 & 0 & 0 & 0 & 0 & 1 & 0 & 2 & 0\\\hline
$\sigma_{15}$ & 0 & 0 & 0 & 0 & 1 & 0  & 1 & 0& 1 & 0 \\\hline
$\sigma_{16}$ & 0 & 0 & 1 & 0 & 1 & 0  & 1 & 0& 1 & 0 \\\hline
$\sigma_{17}$ & 0 & 0 & 0 & 0 & 0 & 0  & 0 & 0 & 0 & 0 \\\hline
$\sigma_{18}$ & 0 & 0 & 0 & 0 & 0 & 0  & 0 & 0& 0 & 0 \\\hline
$\sigma_{19}$ & 0 & 0 & 0 & 0 & 0& 0 & 0 & 0& 0 & 0  \\\hline
$\sigma_{20} $& 0 & 0 & 0 & 0 & 0& 0 & 0 & 0& 0& 0 \\\hline
$\sigma_{21} $& 0 & 0 & 0 & 0 & 0 & 0 & 0 &  0& 0& 0\\\hline
$\sigma_{22} $& 0 & 0 & 0 & 0 & 0 & 1 & 0 & 1 & 0&1\\\hline
$\sigma_{23} $& 0 & 0 & 0 & 0 & 0 & 0 & 0 &  1& 0&1\\\hline
$\sigma_{24} $& 0 & 0 & 0 & 1 & 0 & 1 & 0 &  1& 0&1\\\hline
$\sigma_{25} $& 0 & 0 & 0 & 0 & 0 & 1 & 0 &  1& 0&1\\\hline
$\sigma_{26}$ & 0 & 0 & 0 & 0 & 0 & 0 & 0 & 0& 0& 0\\\hline
$\sigma_{27} $& 0 & 0 & 0 & 1 & 0 & 1 & 0 &  1& 0&1\\\hline
$\sigma_{28} $& 0 & 0 & 0 & 0 & 0 & 0 & 0 &  0& 0& 0\\\hline
$\sigma_{T}  $ & 0 & 1 & 0 & 1 & 0 & 1 & 0 & 1 & 0& 1\\\hline
$\sigma_{G}$  & 1 & 0  & 1 & 0 & 1 & 0 & 1 & 0& 1& 0 \\\hline
\end{tabular}
& \hspace{1cm} &
\begin{tabular}{|c|c|c|c|c|c|c|c|c|}
\hline
 & $\mathbb{C}_0$ & $\mathbb{C}_1$ & $\mathbb{C}_2$ & $\mathbb{C}_3$ & $\mathbb{C}_4$ & $\mathbb{C}_5$ & $\mathbb{C}_ 6$ & $\mathbb{C}_ 7$ \\\hline
$\sigma_1$     & 0 & \cellcolor{gray!25}1 & 0 & \cellcolor{gray!25}0 & 0 &\cellcolor{gray!25} 0 & 0 & \cellcolor{gray!25}0 \\\hline
$\sigma_2$     & 0 & \cellcolor{gray!25}1 & 0 & \cellcolor{gray!25}1 & 0 &\cellcolor{gray!25} 0 & 0 & \cellcolor{gray!25}0 \\\hline
$\sigma_3$     & 0 & \cellcolor{gray!25}1 & 0 & \cellcolor{gray!25}1 & 0 &\cellcolor{gray!25} 1 & 0  & \cellcolor{gray!25}0 \\\hline
$\sigma_4   $  & 0 & \cellcolor{gray!25}1 & 0 & \cellcolor{gray!25}1 & 0 & \cellcolor{gray!25}1 & 0  & \cellcolor{gray!25}1 \\\hline
$\sigma_5   $  & 0 & \cellcolor{gray!25}1 & 0 &\cellcolor{gray!25} 1 & 0 &\cellcolor{gray!25} 1 & 0  & \cellcolor{gray!25}1 \\\hline
$\sigma_6   $  & 0 & \cellcolor{gray!25}1 & 0 & \cellcolor{gray!25}1 & 0 & \cellcolor{gray!25}1 & 0  & \cellcolor{gray!25}1 \\\hline
$\sigma_7  $   & 0 & \cellcolor{gray!25}1 & 0 & \cellcolor{gray!25}1 & 0 & \cellcolor{gray!25}1 & 0  &\cellcolor{gray!25}1 \\\hline
$\sigma_8  $   & 0 & \cellcolor{gray!25}1 & 0 & \cellcolor{gray!25}1 & 0 & \cellcolor{gray!25}1 & 0  & \cellcolor{gray!25}1 \\\hline
$\sigma_9  $   & 0 & \cellcolor{gray!25}1 & 0 & \cellcolor{gray!25}1 & 0 & \cellcolor{gray!25}1 & 0  & \cellcolor{gray!25}1 \\\hline
$\sigma_{10} $& 1 & 0 & 0 & 0 & 0 & 0 & 0 & 0 \\\hline
$\sigma_{11} $& 2 & 0 & 2 & 0 & 0 & 0 & 0 & 0\\\hline
$\sigma_{12}$ & 1 & 0 & 1 & 0 & 1 & 0  & 0 & 0 \\\hline
$\sigma_{13} $& 1 & 0 & 1 & 0 & 1 & 0  & 1 & 0 \\\hline
$\sigma_{14} $& 2 & 0 & 2 & 0 & 2 & 0 & 2 & 0 \\\hline
$\sigma_{15} $& 1 & 0 & 1 & 0 & 1 & 0  & 1 & 0 \\\hline
$\sigma_{16}$ & 1 & 0 & 1 & 0 & 1 & 0  & 1 & 0 \\\hline
$\sigma_{17}$ & 1 & 0 & 1 & 0 & 1 & 0  & 1 & 0 \\\hline
$\sigma_{18}$ & 1 & 0 & 1 & 0 & 1 & 0  & 1 & 0 \\\hline
$\sigma_{19}$& 0 & 0 & 0 & 0 & 0& 0 & 0 & 0 \\\hline
$\sigma_{20}$ & 0 & 1 & 0 & 0 & 0& 0 & 0 & 0 \\\hline
$\sigma_{21} $& 0 & 2 & 0 & 1 & 0 & 0 & 0 &  0\\\hline
$\sigma_{22} $& 0 & 2 & 0 & 2 & 0 & 2 & 0 & 0 \\\hline
$\sigma_{23}$ & 0 & 1 & 0 & 1 & 0 & 1 & 0 &  1\\\hline
$\sigma_{24} $& 0 & 1 & 0 & 1 & 0 & 1 & 0 &  1\\\hline
$\sigma_{25}$ & 0 & 1 & 0 & 1 & 0 & 1 & 0 &  1\\\hline
$\sigma_{26} $& 0 & 1 & 0 & 1 & 0 & 1 & 0 & 1\\\hline
$\sigma_{27} $& 0 & 1 & 0 & 1 & 0 & 1 & 0 &  1\\\hline
$\sigma_{28} $& 0 & 1 & 0 & 1 & 0 & 1 & 0 &  1\\\hline
$\sigma_{T} $  & 0 & 1 & 0 & 1 & 0 & 1 & 0 & 1 \\\hline
$\sigma_{G}$  & 1 & 0  & 1 & 0 & 1 & 0 & 1 & 0 \\\hline
\end{tabular}
\\
\end{tabular}
\caption{$F_{KB}\downarrow\omega$ {\it (left)} and $F_{KB}\uparrow\omega$ {\it (right)} of the SN P system of Figure \ref{snps_1}}\label{Tab_1}
\end{table}}

\section{Conclusions anf Future Work}\label{Con}
In the last years, the success of technological devices inspired in the connections on neurons in the brain have is doubtless. Almost each day we read news about new achievements obtained by new models or new architectures. Many of the recent developments on neural networks get new knowledge able to predict or classify with an impressive accuracy, but such implicit knowledge is not human readable. Recently, many researchers have started to wonder how {\it translate} this implicit knowledge into a set of rules in order to be understood by humans and then, to be able to introduce new improvements in the technical designs. In the literature, different approaches by using connectionist models for logic-based representation and reasoning can be found. For example, in \cite{DBLP:journals/neco/Pinkas91}, a study of the relation between the SAT problem the minimizing energy in several types of neural networks is presented.

Such translation needs bridges and two of them can be, on the one hand, a set of logic based study there a statement can be considered {\it True} or {\it False} in some sense and them, to be able of apply inference rules to acquire more knowledge and, on the other hand, a neural-inspired model able to handle with binary information, as SN P systems do.   

In this paper, we propose a possible bridge by studying two non-monotonic logic inference rules into a neural-inspired model. This new point of view could shed a new light to further research possibilities. On the one side, to study if new inference rules can be studied in the framework of SN P systems. On the other side, if other bio-inspired models are also capable of dealing with logic inference rules. 

Recently there exist other approaches to model logic-based reasoning with neural models
that tackle questions on entailment and satisfiability. In \cite{AlzaeemiSAKM2017}, the authors use a type of
Hopfield networks to model and solve non-horn 3-SAT, although are models of continuous
nature. Moreover, SN P systems can be useful models to both design and verify logic-based tasks. As future work, an interesting research line can be to discretize classical continuous spiking models and to model them via SN P systems. The target is to explore techniques for verifying and validating such models in industrial applications as robotics \cite{DBLP:journals/finr/BingMRHK18,DBLP:journals/corr/abs-1902-06410}.

\section*{Acknowledgements}
Daniel Rodr\'iguez Chavarr\'ia thanks the partial support by the Youth Employment Operative Program, co-financied with FEDER founds.

\bibliographystyle{unsrt}  
\bibliography{../../../../MiBiblio}

\end{document}